\def\set@curr@file#1{\def\@curr@file{#1}} 
\newtheorem{thm}{Theorem}
\newtheorem{lem}[thm]{Lemma}
\newtheorem{conj}{Conjecture}
\newtheorem{claim}[thm]{Claim}
\theoremstyle{definition}
\newtheorem{defn}{Definition}
\newtheorem*{defn*}{Definition}
\newtheorem{exam}{Example}
\newtheorem{prop}[thm]{Proposition}
\renewenvironment{proof}[1][]{\par\noindent{\bf Proof #1\ }}{\hfill\BlackBox\\}
\newcommand{\inbrace}[1]{\left \{ #1 \right \}}
\newcommand{\inparen}[1]{\left ( #1 \right )}
\newcommand{\insquare}[1]{\left [ #1 \right ]}
\newcommand{\abs}[1]{\left\lvert #1 \right\rvert}
\newlength{\dhatheight}
\mathchardef\mhyphen="2D
\newcommand{\SET}[1]{\inbrace{#1}}
\newcommand{\setdef}[2]{\SET{#1 : #2}}
\DeclareMathOperator*{\argmin}{argmin}
\DeclareMathOperator*{\Ex}{\mathbb{E}}
\DeclareMathOperator*{\Prob}{Pr}
\newcommand{\eps}{\varepsilon}
\newcommand{\bbN}{{\mathbb N}}
\newcommand{\bbR}{{\mathbb R}}
\newcommand{\bbA}{{\mathbb A}}
\newcommand{\bbB}{{\mathbb B}}
\newcommand{\bbG}{{\mathbb G}}
\let\boldm\bm
\newcommand{\by}{{\boldm y}}
\newcommand{\calD}{\mathcal{D}}
\newcommand{\calE}{\mathcal{E}}
\newcommand{\calH}{\mathcal{H}}
\newcommand{\calL}{\mathcal{L}}
\newcommand{\calM}{\mathcal{M}}
\newcommand{\calO}{\mathcal{O}}
\newcommand{\calU}{\mathcal{U}}
\newcommand{\calX}{\mathcal{X}}
\newcommand{\calY}{\mathcal{Y}}
\newcommand{\ERM}{\textsf{ERM}\xspace}
\newcommand{\ind}{\mathbbm{1}}
\newcommand{\Risk}{{\rm R}}
\newcommand{\vc}{{\rm vc}}
\newcommand{\MAJ}{{\rm MAJ}}
\newcommand{\Mre}{m_0}
\newcommand{\wrt}{with respect to }
\newcommand{\RE}{{\rm RE}}
\newcommand{\re}{{\rm re}}
\newcommand{\ag}{{\rm ag}}
\newcommand{\outdeg}{{\rm adv\mhyphen outdeg}}
\newcommand{\adeg}{{\rm advdeg}}
\newcommand{\CM}{{\mathfrak{D}}_{\calU}(\calH)}
\newcommand{\graph}{global one-inclusion graph}
\newcommand{\removed}[1]{}
\begin{document}
\title{Adversarially Robust Learning:\\ A Generic Minimax Optimal Learner and Characterization}
\author{%
 \name{Omar Montasser} \email{omar@ttic.edu}\\
 \addr Toyota Technological Institute at Chicago\\
 \name{Steve Hanneke} \email{steve.hanneke@gmail.com}\\
 \addr Purdue University\\
 \name{Nathan Srebro} \email{nati@ttic.edu}\\
 \addr Toyota Technological Institute at Chicago
}

\maketitle

\begin{abstract}
We present a minimax optimal learner for the problem of learning predictors robust to adversarial examples at test-time. Interestingly, we find that this requires new algorithmic ideas and approaches to adversarially robust learning. In particular, we show, in a strong negative sense, the suboptimality of the robust learner proposed by \citet*{pmlr-v99-montasser19a} and a broader family of learners we identify as \emph{local} learners. Our results are enabled by adopting a \emph{global} perspective, specifically, through a key technical contribution: the \emph{\graph}, which may be of independent interest, that generalizes the classical one-inclusion graph due to \citet*{DBLP:journals/iandc/HausslerLW94}. Finally, as a byproduct, we identify a dimension characterizing qualitatively and quantitatively what classes of predictors $\calH$ are robustly learnable. This resolves an open problem due to \citet{pmlr-v99-montasser19a}, and closes a (potentially) infinite gap between the established upper and lower bounds on the sample complexity of adversarially robust learning. 
\end{abstract}

\section{Introduction}
\label{sec:intro}
We study the problem of learning predictors that are {\em robust} to adversarial examples at test-time. Adversarial examples can be thought of as carefully crafted input perturbations that cause predictors to misclassify. Learning predictors robust to adversarial examples is a major contemporary challenge in machine learning. There has been a significant interest lately in how deep learning predictors are {\em not} robust to adversarial examples \citep[][]{szegedy2013intriguing,biggio2013evasion,DBLP:journals/corr/GoodfellowSS14} -- e.g., to adversarial perturbations of bounded $\ell_p$-norms-- leading to an ongoing effort to devise methods for learning predictors that {\em are} adversarially robust.

The aim of this paper is to put forward a theory precisely characterizing the complexity of \emph{robust} learnability. We know from prior work that finite VC dimension is \emph{sufficient} for robust learnability, but we also know that its finiteness is not \emph{necessary} \citep*{pmlr-v99-montasser19a}. Furthermore, there is a (potentially) \emph{infinite} gap between the established quantitative upper and lower bounds on the sample complexity of adversarially robust learning \citep*{pmlr-v99-montasser19a}, and we do not know of any \emph{optimal} learners for this problem. In this paper, we address the following fundamental questions:
\begin{center}
    {\em What classes of predictors $\calH$ are \emph{robustly} learnable \wrt an \emph{arbitrary} perturbation set?\\
    Can we design generic optimal learners for adversarially robust learning?}
\end{center}

The problem of characterizing learnability is the most basic question of statistical learning theory. In classical (non-robust) supervised learning, the fundamental theorem of statistical learning \citep*{vapnik:71,vapnik:74,blumer:89,ehrenfeucht:89} provides a complete understanding of \emph{what} is learnable: classes $\calH$ with finite VC dimension, and \emph{how} to learn: by the generic learner \emph{empirical risk minimization} ($\ERM_\calH$). We also know that $\ERM_\calH$ is a near-optimal learner for $\calH$ with sample complexity that is quantified tightly by the VC dimension of $\calH$.

\paragraph{Problem setup.} Given an instance space $\calX$ and label space $\calY=\SET{\pm 1}$, we consider robustly learning an arbitrary hypothesis class $\calH\subseteq \calY^\calX$ \wrt an arbitrary perturbation set $\calU:\calX\to 2^\calX$, where $\calU(x)\subseteq \calX$ represents the set of perturbations that can be chosen by an adversary at test-time, as measured by the \emph{robust risk}:
\begin{equation}
    \label{eqn:rob-risk}
    \Ex_{(x,y)\sim\calD} \insquare{\sup_{z\in \calU(x)}\ind\SET{h(z)\neq y}}.
\end{equation}
We denote by ${\rm RE}(\calH,\calU)$ the set of distributions $\calD$ over $\calX\times \calY$ that are \emph{robustly realizable}: $\exists h^*\in \calH, \Risk_\calU(h^*;\calD)=0$. A learner $\bbA:(\calX\times \calY)^*\to \calY^\calX$ receives $n$ i.i.d.~examples $S=\SET{(x_i,y_i)}_{i=1}^{n}$ drawn from some unknown distribution $\calD\in {\rm RE}(\calH,\calU)$, and outputs a predictor $\bbA(S)$. The worst-case expected \emph{robust} risk of learner $\bbA$ \wrt $\calH$ and $\calU$ is defined as:
\begin{equation}
\label{eqn:exp-rob-risk-A}
    \calE_n(\bbA;\calH,\calU)=\sup_{\calD\in {\rm RE(\calH,\calU)}} \Ex_{S\sim \calD^{n}} \Risk_\calU(\bbA(S);\calD).
\end{equation}
The \emph{minimax} expected robust risk of learning $\calH$ \wrt $\calU$ is defined as:
\begin{equation}
\label{eqn:exp-rob-risk}
    \calE_n(\calH,\calU)=\inf_{\bbA} \calE_n(\bbA;\calH,\calU).
\end{equation}
For any $\eps \in (0,1)$, the \emph{sample complexity of realizable robust $\eps$-PAC learning of $\calH$ with respect to $\calU$}, denoted $\calM^\re_{\eps}(\calH,\calU)$, is defined as 
\begin{equation}
\label{eqn:sample-complexity}
\calM^{\re}_\eps(\calH,\calU) = \min\SET{n\in \bbN\cup\SET{\infty}: \calE_{n}(\calH,\calU) \leq \eps}.
\end{equation}
$\calH$ is robustly PAC learnable realizably with respect to $\calU$ if $\forall_{\epsilon\in (0,1)}$, $\calM^{\re}_\eps(\calH,\calU)$ is finite. 

\paragraph{Related work and gaps.}\citet*{pmlr-v99-montasser19a} showed that any class $\calH$ with finite VC dimension is robustly PAC learnable \wrt any perturbation set $\calU$; by establishing that $\calM^{\re}_\eps(\calH,\calU)\leq \Tilde{O}(\frac{2^{\vc(\calH)}}{\eps})$, where $\vc(\calH)$ denotes the VC dimension of $\calH$. While this gives a \emph{sufficient} condition for robust learnability, they also showed that finite VC dimension is {\em not} necessary for robust learnability, indicating a (potentially) infinite gap between the established upper and lower bounds on the sample complexity. We next provide simple motivating examples that highlight these gaps in this existing theory, and suggest that the learner witnessing this upper bound might be very sub-optimal:

\begin{exam}
\label{exam:all-labels}
{\normalfont Consider an infinite domain $\calX$, the hypothesis class of all possible predictors $\calH = \calY^{\calX}$, and an all-powerful perturbation set $\calU(x)=\calX$. In this case, the hypothesis minimizing the population robust risk $\Risk_{\calU}(h;\calD)$ would always be the all-positive or the all-negative hypothesis, and so these are the only two hypotheses we should compete with.  And so, even though $\vc(\calH)=\infty$, a single example suffices to inform the learner of whether to produce the all-positive or all-negative function.} 
\end{exam}

\begin{exam}
\label{exam:halfspaces}
{\normalfont A less extreme and more natural example is to take $\calX = \bbR^{\infty}$ (an infinite dimensional space), and $\calH$ the set of homogeneous halfspaces in $\calX$, and a perturbation set $\calU(x)=\{z\in\calX: \langle x, v\rangle = \langle z, v\rangle \text{ for }v\in V\}$ where $V$ is the set of the first $d$ standard basis vectors. In this example, an adversary is allowed to \emph{arbitrarily} corrupt all but $d$ features. Note that $\vc(\calH)=\infty$ but we can robustly PAC learn $\calH$ with $O(d)$ samples: simply project samples from $\calX$ onto the subspace spanned by $V$ and learn a $d$-dimensional halfspace.}
\end{exam}

\paragraph{Our contributions.} In fact, even more strongly, we show in \prettyref{thm:local-alg-lower} that there are problem instances $(\calH,\calU)$ that are {\em not} robustly learnable by the learner proposed by \citet*{pmlr-v99-montasser19a}, but {\em are} robustly learnable with a different generic learner. Beyond this, \prettyref{thm:local-alg-lower} actually illustrates, in a strong negative sense, the suboptimality of any \emph{local} learner -- a family of learners that we identify in this work -- which informally {\em only} has access to labeled training examples and perturbations of the training examples, but otherwise does not know the perturbation set $\calU$ (defined formally in \prettyref{def:local-alg}). 

In this work, we adopt a \emph{global} perspective on robust learning. In \prettyref{sec:graph}, we introduce a novel graph construction, the \emph{\graph}, that in essence embodies the complexity of \emph{robust} learnability. In \prettyref{thm:optimal}, for any class $\calH$ and perturbation set $\calU$, we utilize the \graph~to construct a generic \emph{minimax optimal} learner $\bbG_{\calH,\calU}$ satisfying $\calE_{2n-1}(\bbG_{\calH,\calU};\calH,\calU)\leq 4\cdot \calE_{n}(\calH,\calU)$. Our \graph~utilizes the structure of the class $\calH$ and the perturbation set $\calU$ in a global manner by considering \emph{all} datasets of size $n$ that are robustly realizable, where each dataset corresponds to a vertex in the graph. Edges in the graph correspond to pairs of datasets that agree on $n-1$ datapoints, disagree on the $n^{\text{th}}$ label, and \emph{overlap} on the $n^{\text{th}}$ datapoint according to their $\calU$ sets. We arrive at an optimal learner by orienting the edges of this graph to minimize a notion of \emph{adversarial-out-degree} that corresponds to the average leave-one-out \emph{robust} error. Our learner avoids the lower bound in \prettyref{thm:local-alg-lower} since it is \emph{non-local} and utilizes the structure of $\calU$ at \emph{test-time}.

In \prettyref{sec:complexity-measure}, we introduce a new complexity measure denoted $\CM$ (defined in \prettyref{eqn:complexity-measure}) based on our \graph. We show in \prettyref{thm:realizable-qual} that $\CM$ \emph{qualitatively} characterizes robust learnability: a class $\calH$ is robustly learnable \wrt $\calU$ if and only if $\CM$ is finite. In \prettyref{thm:realizable-bounds}, we show that $\CM$ tightly \emph{quantifies} the sample complexity of robust learnability: $\Omega(\frac{\CM}{\eps})\leq \calM^{\re}_\eps(\calH,\calU) \leq \Tilde{O}(\frac{\CM}{\eps})$. This closes the (potentially) \emph{infinite} gap previously established \citep*{pmlr-v99-montasser19a}.

In \prettyref{sec:agnostic}, beyond the realizable setting, we show in \prettyref{thm:agnostic} that our complexity measure $\CM$ bounds the sample complexity of \emph{agnostic} robust learning: $\calM^{\ag}_{\eps}(\calH,\calU)\leq \Tilde{O}(\frac{\CM}{\eps^2})$. This shows that $\CM$ tightly (up to log factors) characterizes the sample complexity of \emph{agnostic} robust learning, since by definition, $\calM^{\ag}_{\eps}(\calH,\calU)\geq \calM^{\re}_{\eps}(\calH,\calU)$.

\section{Local learners are suboptimal}
\label{sec:local}
In this section, we identify a broad family of learners, which we term \emph{local} learners, and show that such learners are \emph{suboptimal} for adversarially robust learning. Informally, \emph{local} learners {\em only} have access to labeled training examples and perturbations of the training examples, but otherwise do not know the perturbation set $\calU$. More formally, 
 
\begin{defn} [Local Learners]
\label{def:local-alg}
For any class $\calH$, a local learner $\bbA_\calH:(\calX\times \calY \times 2^\calX)^*\to \calY^\calX$ for $\calH$ takes as input a sequence $S_\calU=\SET{(x_i,y_i,\calU(x_i)}_{i=1}^{m} \in \calX\times \calY\times 2^\calX $ consisting of labeled training examples and their corresponding perturbations according to some perturbation set $\calU$, and outputs a predictor $f\in\calY^\calX$. In other words, $\bbA$ has full knowledge of $\calH$, but only \emph{local} knowledge of $\calU$ through the training examples.
\end{defn}
We note that the robust learner proposed by \citet*{pmlr-v99-montasser19a}, for example, {\em is} a local learner: for a given a class $\calH$ and input $S_\calU=\SET{(x_i,y_i,\calU(x_i)}_{i=1}^{m}$, their learner outputs a majority-vote over predictors in $\calH$, that are carefully chosen based on the input $S_\calU$. Moreover, adversarial training methods in practice \citep[e.g., ][]{DBLP:conf/iclr/MadryMSTV18,DBLP:conf/icml/ZhangYJXGJ19} are also examples of local learners (they only utilize the perturbations on the training examples). We provably show next that \emph{local} learners are \emph{not} optimal. We give a construction where it is not possible to robustly learn without taking advantage of the information about $\calU$ at \emph{test-time}. 
\begin{thm} 
\label{thm:local-alg-lower}
There is an instance space $\calX$ and a class $\calH$, such that for any \emph{local} learner $\bbA_\calH:(\calX\times \calY\times 2^\calX)^*\to \calY^\calX$ and any sample size $m\in \bbN$, there exists a perturbation set $\calU$ for which:
\begin{enumerate}
    \item $\bbA_\calH$ fails to robustly learn $\calH$ \wrt $\calU$ using $m$ samples.
    \item There exists a non-local learner $\bbG_{\calH,\calU}:(\calX\times \calY)^*\to\calY^\calX$ which robustly learns $\calH$ \wrt $\calU$ with $0$ samples.
\end{enumerate}
\end{thm}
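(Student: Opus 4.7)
The plan is to exhibit $\calX$ and $\calH$ for which the perturbation set $\calU$ can be made to \emph{secretly encode} an arbitrary target function, so that a non-local learner can read this encoding off $\calU$ globally while a local learner only sees it on the training points. Take $\calX=\bbN\cup\SET{+,-}$ (with two distinguished ``anchor'' points) and
\[
    \calH=\SET{h\in\SET{\pm 1}^{\calX} \; : \; h(+)=+1,\ h(-)=-1}.
\]
For any function $h_0:\bbN\to\SET{\pm 1}$, define $\calU_{h_0}$ by $\calU_{h_0}(n)=\SET{n,\phi(h_0(n))}$ for $n\in\bbN$, where $\phi(+1)=+$ and $\phi(-1)=-$, and $\calU_{h_0}(\pm)=\SET{\pm}$. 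Each $\calU_{h_0}(n)$ thus contains the anchor whose fixed $\calH$-label equals $h_0(n)$.

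For Part $(2)$, because $\calH$ fixes the values on the two anchors, any $\calD\in\RE(\calH,\calU_{h_0})$ with $(n,y)\in\mathrm{supp}(\calD)$ and $n\in\bbN$ must satisfy $y=h^*(\phi(h_0(n)))=h_0(n)$ for any realizing $h^*\in\calH$. I define $\bbG_{\calH,\calU_{h_0}}$ to output the fixed predictor $g$ with $g(n)=+1$ iff $+\in\calU_{h_0}(n)$ on $\bbN$, and $g(\pm)=\pm 1$ on the anchors. Then $g$ coincides with $h_0$ on $\bbN$ and achieves $\Risk_{\calU_{h_0}}(g;\calD)=0$ on every realizable $\calD$, so $\bbG_{\calH,\calU_{h_0}}$ robustly learns with $0$ samples.

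For Part $(1)$, I use a Yao-style probabilistic argument. Fix a local learner $\bbA_\calH$ and sample size $m$, pick $N\geq 3m$, and draw $h_0:\SET{1,\ldots,N}\to\SET{\pm 1}$ uniformly at random (extended arbitrarily outside $\SET{1,\ldots,N}$). Let $\calD_N$ be uniform on $\SET{(n,h_0(n)) : n\leq N}$, which is realizable by the extension of $h_0$ into $\calH$. The local learner's input is
\[
    S_\calU=\SET{\left(n_i,\ h_0(n_i),\ \SET{n_i,\phi(h_0(n_i))}\right)}_{i=1}^m,
\]
which reveals $h_0$ only on the sampled indices. For a test point $n\leq N$ not appearing in $S$, $h_0(n)$ is, conditional on $S_\calU$, a uniform $\pm 1$ independent of the learner's output $\bbA_\calH(S_\calU)$; since $n\in\calU_{h_0}(n)$, the robust error at $n$ is at least $\ind\SET{\bbA_\calH(S_\calU)(n)\neq h_0(n)}$, which has conditional probability $1/2$. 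Averaging over the random test point and using that at least $N-m$ indices are unsampled gives
\[
    \Ex_{h_0}\ \Ex_{S\sim\calD_N^m}\Risk_{\calU_{h_0}}\!\left(\bbA_\calH(S_\calU);\calD_N\right)\ \geq\ \tfrac{1}{2}\cdot\tfrac{N-m}{N}\ \geq\ \tfrac{1}{3}.
\]
By the probabilistic method there is a specific $h_0$ (hence a specific $\calU_{h_0}$) with $\calE_m(\bbA_\calH;\calH,\calU_{h_0})\geq 1/3$, so $\bbA_\calH$ does not robustly learn $\calH$ with respect to $\calU_{h_0}$ using $m$ samples. A randomized $\bbA_\calH$ is handled by an additional expectation over its internal randomness.

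The main technical obstacle is designing $\calH$ and $\calU$ so that both halves hold simultaneously: $\calU$ must rigidly pin down the realizable labels on the support of every $\calD$ (so the $0$-sample non-local learner is correct on \emph{every} realizable $\calD$, not merely on some of them), while still being uninformative to a local learner about $h_0$ at unsampled indices. Constraining $\calH$ at the two anchors is exactly what turns ``which anchor lies in $\calU_{h_0}(n)$'' into a hard label constraint, which is what makes the non-local side succeed without blocking the local-learner lower bound.
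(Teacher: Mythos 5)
Your proof is correct and follows essentially the same approach as the paper's: fix a class $\calH$ with distinguished ``anchor'' points on which every hypothesis agrees, then let a randomly-chosen perturbation set $\calU$ secretly encode a target labeling by linking each free point to exactly one anchor, so that a local learner (which sees $\calU$ only on training points) provably cannot recover the encoding on unsampled points, while a non-local learner can decode it directly. The paper's instantiation uses one anchor \emph{pair} $(x^+_n,x^-_n)$ per free point $z_n$, puts the hard distribution on the anchors (whose labels are deterministic), and lets $\calU$ decide which anchor's perturbation ball contains $z_n$; your version is somewhat leaner — two anchors $\SET{+,-}$ shared by all of $\bbN$, the hard distribution placed directly on the free points with $\calU(n)=\SET{n,\phi(h_0(n))}$ forcing the only robustly-realizable label — and it also hands the reader an explicit closed-form $0$-sample learner $g$, rather than appealing to the yet-to-be-defined graph learner $\bbG_{\calH,\calU}$ as the paper does. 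Both proofs conclude Part~(1) by the same Yao/probabilistic-method step: lower-bound the expected robust risk over the random $\calU$ by the mass on unsampled indices times $\nicefrac{1}{2}$, then derandomize.
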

This negative result highlights that there are limitations to what can be achieved with {\em local} learners. It also highlights the importance of utilizing the structure of the perturbation set $\calU$ at test-time, which has been observed in the context of \emph{transductive} robust learning where the learner receives a training set of $n$ labeled examples and a test set of $n$ unlabeled adversarial perturbations, and is asked to label the test set with few errors \citep*{montasser2021transductive}. In practice, randomized smoothing \citep{DBLP:conf/icml/CohenRK19} is an example of a non-local method in the sense that at prediction time, it uses the perturbation set to compute predictions.
\begin{proof}[of \prettyref{thm:local-alg-lower}]
We begin with describing the instance space $\calX$ and the class $\calH$. Pick three infinite unique sequences $(x^{+}_n)_{n\in \bbN}$, $(x^{-}_n)_{n\in \bbN}$, and $(z_n)_{n\in \bbN}$ from $\bbR^2$ such that for each $n \in \bbN: x^+_n=(n,1), x^-_n=(n,-1), z_n=(n,0)$, and let $\calX= \cup_{n\in \bbN} \SET{x^+_n, x^-_n, z_n}$. Consider the class $\calH$ defined by 
\begin{equation}
\label{eqn:class-construction}
    \calH=\SET{h_\by: \by \in \SET{\pm 1}^{\bbN}}, \text{ where }h_\by(z_n)=y_n \wedge h_\by(x_n^+)=+1 \wedge h_\by(x_n^-)=-1 \inparen{\forall n\in \bbN}.
\end{equation}
Observe that all classifiers in $\calH$ are constant on  $(x^{+}_n)_{n\in \bbN}$ and $(x^{-}_n)_{n\in \bbN}$, but they shatter $(z_n)_{n\in \bbN}$. 
We will consider a \emph{random} perturbation set $\calU:\calX\to 2^\calX$ that is defined as follows:
\begin{equation}
\label{eqn:random-U}
    \forall n\in\bbN: 
    \begin{cases}
        \calU(x^+_{n}) = \SET{x_n^+,z_n}\text{ and }\calU(x^-_{n}) = \SET{x_n^-}\text{ and }\calU(z_{n}) = \SET{x_n^+,x_n^-,z_n}\text{w.p. }\frac{1}{2},\\
        \calU(x^+_{n}) = \SET{x_n^+}\text{ and }\calU(x^-_{n}) = \SET{x_n^-, z_n}\text{ and }\calU(z_{n}) = \SET{x_n^+,x_n^-,z_n}~~\text{w.p. }\frac{1}{2}.
    \end{cases}
\end{equation}

For any sample size $m\in\bbN$, let $P$ be a uniform distribution on $$\SET{(x^{+}_1,+1),(x_1^-,-1),\dots, (x_{3m}^+, +1), (x_{3m}^{-},-1)}.$$
Observe that for any randomized $\calU$ (according to \prettyref{eqn:random-U}), the distribution $P$ is \emph{robustly realizable} \wrt $\calU$: $\exists h\in \calH, \Risk_\calU(h;P)=0$. 
Let $\bbA$ be an arbitrary \emph{local} learner (see \prettyref{def:local-alg}), i.e., $\bbA$ has full knowledge of the class $\calH$, but only partial knowledge of $\calU$ through the training samples.
Let $S\sim P^{m}$ be a fixed random set of training examples drawn from $P$. Then,
\begin{align*}
    \Ex_{\calU} \Risk_{\calU}(\bbA(S_\calU);P) &= \Ex_{\calU} \Ex_{(x,y)\sim P} \ind[\exists z\in \calU(x): \bbA(S_\calU)(z)\neq y]\\
    &\geq \Prob_{(x,y)\sim P}\insquare{(x,y)\notin S} \Ex_{\calU} \Ex_{(x,y)\sim P} \insquare{\ind[\exists z\in \calU(x): \bbA(S_\calU)(z)\neq y] | (x,y)\notin S}\\
    &= \Prob_{(x,y)\sim P}\insquare{(x,y)\notin S} \Ex_{(x,y)\sim P} \Prob_{\calU}\insquare{\exists z\in \calU(x): \bbA(S_\calU)(z)\neq y] | (x,y)\notin S}\\
    &\geq \frac{1}{3}\cdot \frac{1}{2} = \frac{1}{6}.
\end{align*}
By law of total expectation, this implies that there exists a deterministic choice of $\calU$ such that $\Ex_{S\sim P^{m}}\Risk_\calU(\bbA(S_\calU);P)\geq \frac{1}{6}$. This establishes that $\bbA$ fails to robustly learn $\calH$ \wrt $\calU$ using $m$ samples. On the other hand, $\calH$ is robustly learnable \wrt $\calU$ with $0$ samples by means of our non-local learner $\bbG_{\calH,\calU}$ (see \prettyref{sec:optimal} and \prettyref{thm:optimal}) which utilizes \emph{full} knowledge of $\calU$. In particular, $0$ samples are needed, since the graph $G^{\calU}_{\calH}$ will contain \emph{no} edges by the definition of $\calH$ (\prettyref{eqn:class-construction}) and $\calU$ (\prettyref{eqn:random-U}).
\end{proof}

\section{A global one-inclusion graph}
\label{sec:graph}
To go beyond the limitations of local learners from \prettyref{sec:local}, in this section, we introduce: the \emph{\graph}, the main mathematical object which allows us to adopt a global perspective on robust learning.\removed{enables our results, use to answer Questions~\ref{q:one}, \ref{q:two}, and \ref{q:three}} Our \graph~is inspired by the classical one-inclusion graph introduced by \citet*{DBLP:journals/iandc/HausslerLW94}, which leads to an algorithm that is near-optimal for (non-robust) PAC learning, and has also been adapted and used in multi-class learning \citep*{DBLP:conf/nips/RubinsteinBR06, DBLP:conf/colt/DanielyS14, DBLP:journals/corr/abs-2203-01550} and for learning partial concept classes\footnote{At a first glance, it might seem that adversarially robust learning can be viewed as a special case of learning partial concepts classes \citep{DBLP:conf/focs/AlonHHM21}, but we would like to remark that this is {\em not} the case. The apparent similarity arises because it is possible to state the robust realizability assumption in the language of partial concept classes, as in the example mentioned in \citet{DBLP:conf/focs/AlonHHM21} on learning linear separators with a margin, but this is just an assumption on the data-distribution. Specifically, a partial concept class learner is {\em only} guaranteed to make few errors on samples drawn from the distribution \citep[see Definition 2 in][]{DBLP:conf/focs/AlonHHM21}, and not on their adversarial perturbations: i.e., performance is still measured under 0-1 loss, not robust risk.} \citep*{DBLP:conf/focs/AlonHHM21}. Before introducing our \graph, to ease the readers, we begin first with describing the construction of the classical one-inclusion graph due to \citet*{DBLP:journals/iandc/HausslerLW94} and its use as a (non-robust) learner, and discuss its limitations for adversarially robust learning.

\subsection{Background: classical one-inclusion graphs}
For a given class $\calH$ and a finite dataset $X=\SET{x_1,\dots,x_n}\subseteq \calX$, the classical one-inclusion graph $G_{X,\calH}$ consists of vertices $V=\SET{(h(x_1),\dots,h(x_n)): h\in \calH}$ where each vertex $v=(h(x_1),\dots,h(x_n))\in V$ is a \emph{realizable} labeling of $X$, and two vertices $u,v\in V$ are connected with an edge if and only if they differ only in the labeling of a single $x_i\in X$. \citet*{DBLP:journals/iandc/HausslerLW94} showed that the edges in $G_{X,\calH}$ can be oriented such that each vertex has out-degree at most $\vc(\calH)$. Now, how can the one-inclusion graph be used as a learner? Given a training set of examples $S=\SET{(x_1,y_1),\dots,(x_{n-1},y_{n-1})}$ and a test example $x_n$, we construct the one-inclusion graph on $\SET{x_1,\dots,x_{n-1}}\cup\SET{x_n}$ using the class $\calH$ and orient it such that maximum out-degree is at most $\vc(\calH)$. Then, we use the orientation to predict the label of the test point $x_n$. Specifically, if there exists $h,h'\in\calH$ such that $\forall 1\leq i \leq n-1: h(x_i)=h'(x_i)$ and $h(x_n)\neq h'(x_n)$ then we will have two vertices in the graph $v=(h(x_1),\dots,h(x_{n-1}),h(x_n))$ and $u=(h(x_1),\dots,h(x_{n-1}),h'(x_n))$ with an edge connecting them (because they differ only in the label of $x_n$), and we predict the label of $x_n$ that this edge is directed towards. Since each vertex has out-degree at most $\vc(\calH)$, this implies that the average leave-one-out-error (which bounds the expected risk from above) is at most $\frac{\vc(\calH)}{n}$. 

What breaks in the adversarial learning setting? At test-time, we do not observe an i.i.d. test example $x\sim \calD$ but rather only an adversarially chosen perturbation $z\in\calU(x)$. This completely breaks the exchangeability analysis of the classical one-inclusion graph, because the training points are i.i.d.~but the perturbation $z$ of the test point $x$ is not. Furthermore, the classical one-inclusion graph is \emph{local} in the sense that it depends on the training data and the test point, and as such different perturbations $z,\Tilde{z}\in\calU(x)$ could very well lead to different graphs, different orientations, and ultimately different predictions for $z$ and $\Tilde{z}$ which by definition imply that the prediction is not robust on $x$. 

\subsection{Our global one-inclusion graph}
We now describe the construction of the global one-inclusion graph. For any class $\calH$, any perturbation set $\calU$, and any dataset size $n\in \bbN$, denote by $G^\calU_\calH=(V_n,E_n)$ the \graph. In words, $V_n$ is the collection of all datasets of size $n$ that can be \emph{robustly} labeled by class $\calH$ \wrt perturbation set $\calU$. Formally, each vertex $v\in V_n$ is represented as a \emph{multiset} of labeled examples $(x,y)$ of size $n$:\footnote{Note that we allow a labeled example $(x,y)$ to appear more than once in a vertex $v$, hence the multiset representation.}
\begin{equation}
\label{eqn:oneinclusion-vertices-v2}
    V_n = \SET{\SET{(x_1,y_1),\dots,(x_n,y_n)}: \inparen{\exists h\in \calH} \inparen{\forall i\in[n]}\inparen{\forall z\in \calU(x_i)}, h(z)=y_i}.
\end{equation}
Two vertices (datasets) $u,v \in V_n$ are connected by an edge if and only if there is a unique labeled example $(x,y)\in v$ that does not appear in $u$ and there is a unique labeled example $(\Tilde{x},\Tilde{y})\in u$ that does not appear in $v$ satisfying: $y\neq \Tilde{y}$ and $\calU(x)\cap \calU(\Tilde{x})\neq \emptyset$. Formally, $u,v\in V_n$ are connected by an edge if and only if their symmetric difference $u\Delta v =\SET{(x,y),(\Tilde{x},\Tilde{y})}$ where $y\neq \Tilde{y}$ and $\calU(x)\cap \calU(\Tilde{x})\neq \emptyset$. Furthermore, we will label an edge by the perturbation $z\in \calU(x)\cap \calU(\Tilde{x})$ that witnesses this edge:
\begin{equation}
\label{eqn:oneinclusion-edges-v2}
    E_n = \setdef{ (\SET{u,v}, z)}{u,v\in V_n \wedge u\Delta v = \SET{(x,y),(\Tilde{x},\Tilde{y})} \wedge y\neq \Tilde{y} \wedge z\in \calU(x)\cap\calU(\Tilde{x})}.
\end{equation}
For each vertex $v\in V_n$, denote by $\adeg(v)$ the adversarial degree of $v$ which is defined as the number of elements $(x,y)\in v$ that witness an edge incident on $v$:
\begin{equation}
\label{eqn:deg-v2}
\adeg(v) = \abs{\SET{(x,y)\in v:\exists u \in V_n, z\in \calX\text{ s.t. }(\SET{v, u}, z)\in E_n\wedge (x,y)\in v\Delta u}}.
\end{equation}
We want to emphasize that our notion of \emph{adversarial degree} is different from the standard notion of degree used in graph theory, and in particular different from the degree notion in the classical one-inclusion graph used above. Specifically, we do \emph{not} count all edges incident on a vertex rather we count the number of datapoints $(x,y)$ in a vertex that witness an edge. This different notion of degree is more suitable for our purposes and is related to the average leave-one-out \emph{robust} error.

\subsection{From orientations to learners}

An orientation $\calO: E_n\to V_n$ of the global one-inclusion graph $G_\calH^\calU=(V_n,E_n)$ is a mapping that directs each edge $e=(\SET{u,v},z)\in E_n$ towards a vertex $\calO(e)\in \SET{u,v}$. Given an orientation $\calO:E_n\to V_n$ of the global one-inclusion graph $G^\calU_\calH$, the adversarial out-degree of a vertex $v\in V$, denoted by $\outdeg(v; \calO)$, is defined as the number of elements $(x,y)\in v$ that witness an out-going edge incident on $v$ according to orientation $\calO$:
\begin{equation}
\label{eqn:outdeg-v2}
\outdeg(v;\calO) = \left|\left\{ (x,y)\in v \middle|
  \begin{array}{l} \exists u \in V_n,z\in \calX \text{ s.t. }(\SET{v, u},z)\in E_n~\wedge\\
  (x,y)\in v\Delta u\wedge \calO((\SET{v,u},z))=u
  \end{array}\right\}\right|.
\end{equation}

Why are we interested in orientations of the \graph~$G_\calH^\calU$? We show next that every orientation of $G_\calH^\calU$ can be used to construct a learner, and that the expected robust risk of this learner is bounded from above by the maximum adversarial out-degree of the corresponding orientation. We will use this observation later in \prettyref{sec:optimal} to construct an optimal learner.

\begin{lem}
\label{lem:orientation-to-prediction}
For any class $\calH$, any perturbation set $\calU$, and any $n > 1$, let $G_\calH^\calU=(V_n,E_n)$ be the global one-inclusion graph. Then, for any orientation $\calO:E_n\to V_n$ of $G_\calH^\calU$, there exists a learner $\bbA_\calO:(\calX\times\calY)^{n-1}\to \calY^\calX$, such that 
\[\calE_{n-1}(\bbA_\calO;\calH,\calU) \leq \frac{\max_{v\in V_n} \outdeg(v;\calO)}{n}.\]
\end{lem}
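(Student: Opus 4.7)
The plan is to extract a learner $\bbA_\calO$ directly from the orientation $\calO$ and then bound its expected robust risk by a leave-one-out symmetrization, in the spirit of the classical one-inclusion analysis but with the \graph\ replacing the local one-inclusion graph so that exchangeability survives adversarial perturbations.

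First I would define the learner as follows. On input $S = \SET{(x_i, y_i)}_{i=1}^{n-1}$ and query $z \in \calX$, let $\bbA_\calO(S)(z)$ be any label $y \in \calY$ for which there exists a vertex $v \in V_n$ of the form $v = S \cup \SET{(x, y)}$ with $z \in \calU(x)$ such that \emph{every} edge $(\SET{v, u}, z) \in E_n$ with $u$ extending $S$ (as multisets) is directed by $\calO$ toward $v$; if no such $y$ exists, output arbitrarily. A short argument shows this rule is unambiguous: if both $+1$ and $-1$ qualified, there would be $v^+, v^- \in V_n$ both extending $S$, opposite in label on $z$, and both with no outgoing $z$-edge extending $S$. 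But $(\SET{v^+, v^-}, z)$ itself lies in $E_n$ (labels differ and $z \in \calU(x^+) \cap \calU(x^-)$), and $\calO$ must direct it toward one of the two, contradicting the ``no outgoing'' property of the other.

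For the risk bound I would fix $\calD \in \RE(\calH, \calU)$, draw $s = (s_1, \ldots, s_n) \sim \calD^n$ with $s_i = (x_i, y_i)$, and let $S^{(-i)}$ be $s$ with $s_i$ removed. By robust realizability the multiset $v^* := \SET{s_1, \ldots, s_n}$ belongs to $V_n$ almost surely, and exchangeability of the i.i.d.\ sample yields
\[
\calE_{n-1}(\bbA_\calO; \calH, \calU) \;=\; \sup_{\calD} \, \Ex_{s \sim \calD^n} \frac{1}{n} \sum_{i=1}^n \sup_{z \in \calU(x_i)} \ind[\bbA_\calO(S^{(-i)})(z) \neq y_i].
\]
It then suffices to show that for every $v^* \in V_n$ the inner sum is at most $\outdeg(v^*; \calO)$. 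The key implication is: if the $i$-th term is $1$, witnessed by some $z \in \calU(x_i)$, then by the learner's definition there is $v = S^{(-i)} \cup \SET{(\tilde x, -y_i)} \in V_n$ with $z \in \calU(\tilde x)$ all of whose $z$-edges extending $S^{(-i)}$ are directed toward $v$. In particular the edge $(\SET{v, v^*}, z)$---which lies in $E_n$ because $v^* \Delta v = \SET{(x_i, y_i), (\tilde x, -y_i)}$ has opposite labels and $z \in \calU(x_i) \cap \calU(\tilde x)$---is directed away from $v^*$, so $(x_i, y_i) \in v^*$ witnesses an out-edge of $v^*$. Distinct error-indices $i$ correspond to distinct positions in the multiset $v^*$, so the count of errors is at most $\outdeg(v^*; \calO)$.

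The main obstacle I anticipate is getting the definition of $\bbA_\calO$ right. A naive rule that picks one canonical $z$-edge extending $S^{(-i)}$ and reads off $\calO$ may pick an edge not incident to $v^*$, which severs the link between the learner's error and the direction of the edge at $v^*$ and so prevents charging the error to $\outdeg(v^*;\calO)$. The universal quantifier ``for every edge extending $S$'' in my definition is precisely what forces the specific edge $(\SET{v, v^*}, z)$ to be oriented the way we need, and this seems to be the essential departure from the classical one-inclusion proof.
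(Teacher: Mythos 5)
Your construction is essentially the paper's: you define the learner from the orientation in the same way (the paper phrases it via sets $P_+,P_-$ of vertices in $V_n$ extending the training multiset with $z$-compatible positive or negative examples, and outputs the label $y$ for which some $v\in P_y$ has all its $z$-edges to $P_{-y}$ oriented inward; your ``every edge $(\SET{v,u},z)$ with $u$ extending $S$'' is exactly this, since the only such edges go to $P_{-y}$), your well-definedness argument is identical to the paper's complete-bipartite observation, and the leave-one-out exchangeability step is the same.

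There is one small but genuine slip in your ``key implication.'' You claim that whenever the learner errs on the $i$-th point with witness $z$, ``by the learner's definition there is $v = S^{(-i)}\cup\SET{(\tilde x,-y_i)}$ all of whose $z$-edges extending $S^{(-i)}$ are directed toward $v$.'' That only covers the case where the learner outputs $-y_i$ because $-y_i$ \emph{qualified}. But your learner also has an ``output arbitrarily'' default, and the default can fire when \emph{neither} label qualifies (nothing rules out an orientation of a complete bipartite subgraph in which every vertex has at least one out-edge labeled $z$), in which case the $v$ you invoke does not exist and the sentence is false as stated. The conclusion is still right, just by a slightly different route: if the learner errs on $z\in\calU(x_i)$, then in particular $y_i$ did \emph{not} qualify (had it qualified, well-definedness forces the learner to output $y_i$). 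Non-qualification of $y_i$ means no vertex in $P_{y_i}$ has all its $z$-edges pointing inward; since $v^{*}\in P_{y_i}$, the vertex $v^{*}$ itself has some outgoing $z$-edge, and every $z$-edge from $v^{*}$ to a vertex extending $S^{(-i)}$ has $(x_i,y_i)$ in the symmetric difference, so $(x_i,y_i)$ witnesses an out-edge of $v^{*}$. Equivalently, run the contrapositive as the paper does: for each $(x_i,y_i)$ not witnessing an out-edge, all $z$-edges at $v^{*}$ incident to the $i$-th slot point inward, so $y_i$ qualifies via $v^{*}$ and the learner is correct for every $z\in\calU(x_i)$. With this one-sentence repair the proof is complete and matches the paper.
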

The proof is deferred to \prettyref{app:orientation-to-learner}. At a high-level, we can use an orientation $\calO$ of $G_\calH^\calU$ to make predictions in the following way: upon receiving training examples $S$ and a (possibly adversarial) test instance $z$, we consider all possible natural datapoints $(x,y)$ of which $z$ is a perturbation of $x$ (i.e., $z\in\calU(x)$) such that $S\cup\SET{(x,y)}$ can be labeled robustly using class $\calH$ \wrt $\calU$ (note that these are all vertices in $G^\calU_{\calH}$ by definition), and if two different robust labelings of $z$ are possible, the orientation $\calO$ determines which label to predict. This is defined explicitly in \prettyref{alg:orientation-to-learner}.
\begin{algorithm2e}[H]
\caption{Converting an Orientation $\calO$ of $G_\calH^\calU$ to a Learner $\bbA_\calO$.}
\label{alg:orientation-to-learner}
\SetKwInput{KwInput}{Input}                
\SetKwInput{KwOutput}{Output}              
\SetKwFunction{FMain}{CycleRobust}
\SetKwFunction{FDisc}{Discretizer}
\DontPrintSemicolon
    \KwInput{Training dataset $S=\SET{(x_1,y_1),\dots, (x_{n-1},y_{n-1})} \in (\calX\times \calY)^{n-1}$, test instance $z\in \calX$, and an orientation $\calO:E_n\to V_n$ of $G_\calH^\calU=(E_n, V_n)$.}
  \BlankLine
  Let $P_+= \SET{v \in V_n: \exists x \in \calX \text{ s.t. } z\in \calU(x) \wedge v=\SET{(x_1,y_1),\dots,(x_{n-1},y_{n-1}),(x,+1)}}$.\;
  Let $P_-=\SET{v \in V_n: \exists x \in \calX \text{ s.t. } z\in \calU(x) \wedge v=\SET{(x_1,y_1),\dots,(x_{n-1},y_{n-1}),(x,-1)}}$.\;
  \textbf{If $\inparen{\exists_{y\in \SET{\pm1}}}\inparen{\exists_{v\in P_{y}}}\inparen{\forall_{u\in P_{-y}}}: \calO((\SET{v,u}, z))=v$, then} output label $y$.\;
  \textbf{Otherwise}, output $+1$.
\end{algorithm2e}

\section{A generic minimax optimal learner}
\label{sec:optimal}
We now present an optimal robust learner based on our \graph~from \prettyref{sec:graph}.
\begin{tcolorbox}[size=small,colback=white,colframe=black]
For any class $\calH$, any perturbation set $\calU$, and integer $n>1$, let $G_\calH^\calU=(V_n,E_n)$ be the global one-inclusion graph (Equations~\ref{eqn:oneinclusion-vertices-v2} and \ref{eqn:oneinclusion-edges-v2}). Let $\calO^*$ be an orientation that minimizes the maximum adversarial out-degree of $G_\calH^{\calU}$:
\begin{equation}
    \label{eqn:optimal-orientation}
    \calO^* \in \argmin_{\calO:E_n\to V_n}~~~\max_{v\in V_n}~\outdeg(v;\calO).
\end{equation}
Then, let $\bbG_{\calH,\calU}$ be the learner implied by orientation $\calO^*$ as described in \prettyref{alg:orientation-to-learner}.
\end{tcolorbox}
\begin{thm}
\label{thm:optimal}
For any $\calH$, $\calU$, any $n\in \bbN$, learner $\bbG_{\calH,\calU}$ described above satisfies for any learner $\bbA$:
$$\calE_{2n-1}(\bbG_{\calH,\calU};\calH,\calU) \leq 4\cdot \calE_{n}(\bbA;\calH,\calU), \text{ \& equivalently }\calM^{\re}_{\eps}(\bbG_{\calH,\calU};\calH,\calU)\leq 2\cdot \calM^{\re}_{\eps/4}(\bbA;\calH,\calU)-1.$$
\end{thm}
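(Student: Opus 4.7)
The plan is to invoke Lemma~\ref{lem:orientation-to-prediction} with the graph on vertices of size $2n$, which yields
$$\calE_{2n-1}(\bbG_{\calH,\calU};\calH,\calU)\leq \frac{\max_{v\in V_{2n}}\outdeg(v;\calO^*)}{2n}.$$
Because $\calO^*$ minimizes the maximum adversarial out-degree, it suffices to exhibit \emph{any} orientation $\calO_\bbA$ of $G_\calH^\calU=(V_{2n},E_{2n})$ satisfying $\max_{v}\outdeg(v;\calO_\bbA)\leq 8n\cdot\calE_n(\bbA;\calH,\calU)$; dividing by $2n$ then gives the claimed factor-$4$ bound, and the stated sample-complexity form is an immediate rearrangement.

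I would construct $\calO_\bbA$ directly from the reference learner $\bbA$. For each edge $e=(\{u,v\},z)\in E_{2n}$ whose symmetric difference is $u\Delta v=\{(x,y),(\tilde x,\tilde y)\}$, let $w=u\cap v$ be the shared multiset of size $2n-1$ and let $\calD_w$ denote the uniform distribution on the elements of $w$. Orient $e$ toward $u$ if $\Pr_{T\sim\calD_w^n}\!\left[\bbA(T)(z)=\tilde y\right]>1/2$, and toward $v$ otherwise. Intuitively, the orientation agrees with whichever label $\bbA$ is more likely to output on $z$ when trained on $n$ i.i.d.\ samples drawn from the shared part of the two endpoints, which is precisely the regime in which the assumption on $\bbA$ applies.

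Fix $v\in V_{2n}$ and write $\alpha=\calE_n(\bbA;\calH,\calU)$. The uniform distribution $\calD_v$ on the elements of $v$ lies in $\RE(\calH,\calU)$ by the very definition of $V_{2n}$, so expanding $\Risk_\calU(\bbA(T);\calD_v)$ as an average of indicators over $(x,y)\in v$ gives
$$\sum_{(x,y)\in v}\Ex_{T\sim\calD_v^n}\ind\!\left[\exists z\in\calU(x)\colon\bbA(T)(z)\neq y\right]\leq 2n\alpha.$$
By construction of $\calO_\bbA$, an element $(x,y)\in v$ contributes to $\outdeg(v;\calO_\bbA)$ only if there exists $z\in\calU(x)$ with $\Pr_{T\sim\calD_{v\setminus\{(x,y)\}}^n}\!\left[\bbA(T)(z)\neq y\right]\geq 1/2$, and this forces $\Ex_{T\sim\calD_{v\setminus\{(x,y)\}}^n}\ind[\exists z'\in\calU(x)\colon\bbA(T)(z')\neq y]\geq 1/2$; Markov therefore bounds $\outdeg(v;\calO_\bbA)$ by twice the sum of these expectations over $(x,y)\in v$. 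A short coupling then converts each expectation back to one under $\calD_v^n$: since $\Pr_{T\sim\calD_v^n}[T\text{ avoids }(x,y)]=(1-\tfrac{1}{2n})^n\geq \tfrac12$ for every $n\geq 1$, and the conditional law of $T\sim\calD_v^n$ on that event is exactly $\calD_{v\setminus\{(x,y)\}}^n$, each such expectation is at most twice its $\calD_v^n$-counterpart. Combining the two factors of $2$ with the displayed bound gives $\outdeg(v;\calO_\bbA)\leq 8n\alpha$ uniformly in $v$, which completes the argument.

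The main obstacle is precisely this mismatch between the two sampling schemes: the guarantee on $\bbA$ is stated in terms of $n$ i.i.d.\ draws from $\calD_v$, whereas both the orientation rule and the adversarial out-degree count naturally involve the law $\calD_{v\setminus\{(x,y)\}}^n$. The simple ``avoid $(x,y)$'' coupling above is the cleanest way to trade these two distributions at constant cost, and it is precisely this step, together with the Markov conversion from the ``majority'' event to an expectation, that accounts for the factor of $4$ in the theorem and for why the left-hand side uses $2n-1$ samples rather than $n-1$. A more delicate exchangeability/double-sampling argument could tighten the constant but is not needed for the stated result.
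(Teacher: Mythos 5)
Your proof is correct and follows essentially the same route as the paper: use the reference learner $\bbA$ to construct an orientation $\calO_\bbA$ of $G^{\calU}_{\calH}=(V_{2n},E_{2n})$, show $\max_v\outdeg(v;\calO_\bbA)\leq 8n\,\calE_n(\bbA;\calH,\calU)$ via a Markov step followed by an avoid-$(x,y)$ conditioning/coupling argument, and then chain with the optimality of $\calO^*$ and \prettyref{lem:orientation-to-prediction} (the paper packages the first two steps as \prettyref{lem:lowerbound-inductive}). The one small departure is your per-edge orientation rule comparing $\Pr_{T\sim\calD_w^n}[\bbA(T)(z)=\tilde y]$ to $1/2$, which is consistent by construction, whereas the paper orients all edges incident on an element $(x_t,y_t)$ of $v$ at once using the conditional probability $p_t(v)$ of erring at \emph{some} perturbation of $x_t$, and must therefore separately verify that no edge is pulled inward from both endpoints; after that point the two calculations coincide.
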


Before proceeding to the proof of \prettyref{thm:optimal}, we first prove a key Lemma which basically shows that we can use an arbitrary learner $\bbA$ to orient the edges in the global one-inclusion graph $G_\calH^\calU$, and that the maximum adversarial out-degree of the resultant orientation is upper bounded by the robust error rate of $\bbA$. 
\begin{lem} [Lowerbound on Error Rate of Learners]
\label{lem:lowerbound-inductive}
Let $\bbA:(\calX\times \calY)^*\to \calY^\calX$ be any learner, and $n\in \bbN$. Let $G_\calH^\calU=(V_{2n},E_{2n})$ be the global one-inclusion graph as defined in \prettyref{eqn:oneinclusion-vertices-v2} and \prettyref{eqn:oneinclusion-edges-v2}. Then, there exists an orientation $\calO_\bbA:E_{2n}\to V_{2n}$ of $G^\calU_{\calH}$ such that $$\calE_n(\bbA;\calH,\calU)\geq \frac{1}{4}\frac{\max_{v\in V_{2n}} \outdeg(v;\calO_\bbA)}{2n}.$$
\end{lem}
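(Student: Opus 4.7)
I will construct the orientation $\calO_\bbA$ directly from $\bbA$ by looking at how $\bbA$ behaves on the ``shared'' $2n-1$ datapoints of each edge. For any edge $e = (\SET{u,v}, z) \in E_{2n}$ with $v \Delta u = \SET{(x_e, y_e), (\tilde{x}_e, -y_e)}$, $(x_e, y_e) \in v \setminus u$, $(\tilde{x}_e, -y_e) \in u \setminus v$, and $z \in \calU(x_e) \cap \calU(\tilde{x}_e)$, let $C_e = v \cap u$ denote the common $2n-1$ labeled examples (treated as a multiset) and let $\hat{S}_e \sim \mathrm{Unif}(C_e)^n$. Define the orientation
\[
\calO_\bbA(e) = \begin{cases} u & \text{if } \Pr[\bbA(\hat{S}_e)(z) = -y_e] \geq 1/2,\\ v & \text{otherwise},\end{cases}
\]
with ties broken arbitrarily (the probability is over $\hat{S}_e$ and any internal randomness of $\bbA$). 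Intuitively we orient the edge away from whichever endpoint $\bbA$ ``disagrees with'' more often when its training set is drawn only from the shared examples.

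\textbf{Key per-slot estimate.} Fix a vertex $v \in V_{2n}$ and identify each of the $2n$ entries of $v$ with a ``slot''; let $k = \outdeg(v;\calO_\bbA)$ and let $\sigma_1,\dots,\sigma_k$ be distinct slots witnessing outgoing edges $e_1,\dots,e_k$, with corresponding perturbations $z_{e_j} \in \calU(x_{\sigma_j})$. For each $j$, if we draw $S \sim P_v^n$ where $P_v = \mathrm{Unif}(v)$, then the probability that slot $\sigma_j$ is never hit equals $(1 - 1/(2n))^n \geq 1/2$, and \emph{conditional} on this event $S$ is distributed exactly as $\hat{S}_{e_j}$. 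Hence, using that $e_j$ is oriented away from $v$,
\[
\Pr_{S \sim P_v^n}\!\bigl[\bbA(S)(z_{e_j}) \neq y_{\sigma_j}\bigr] \;\geq\; \Pr[\text{slot } \sigma_j \text{ omitted}] \cdot \Pr_{\hat{S}_{e_j}}\!\bigl[\bbA(\hat{S}_{e_j})(z_{e_j}) = -y_{\sigma_j}\bigr] \;\geq\; \tfrac{1}{2}\cdot\tfrac{1}{2} = \tfrac{1}{4}.
\]

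\textbf{Finishing.} Because $v \in V_{2n}$ is robustly realizable, $P_v \in \RE(\calH,\calU)$, so by the definition of $\calE_n(\bbA;\calH,\calU)$,
\[
\calE_n(\bbA;\calH,\calU) \;\geq\; \Ex_{S\sim P_v^n}\!\bigl[\Risk_\calU(\bbA(S);P_v)\bigr] \;=\; \frac{1}{2n}\sum_{i=1}^{2n}\Pr_{S\sim P_v^n}\!\Bigl[\sup_{z\in\calU(x_i)}\ind\SET{\bbA(S)(z)\neq y_i}\Bigr].
\]
Restricting the sum to the $k$ outgoing witnessing slots and plugging in the per-slot bound from the previous step (noting $z_{e_j} \in \calU(x_{\sigma_j})$, so the supremum dominates the indicator at $z_{e_j}$), we obtain $\calE_n(\bbA;\calH,\calU) \geq \frac{k}{8n} = \frac{\outdeg(v;\calO_\bbA)}{8n}$. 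Taking the maximum over $v \in V_{2n}$ gives the claimed $\calE_n(\bbA;\calH,\calU) \geq \tfrac{1}{4}\cdot\tfrac{\max_v \outdeg(v;\calO_\bbA)}{2n}$.

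\textbf{Main obstacle.} The delicate point is the conditioning step that lets us relate $S \sim P_v^n$ to $\hat{S}_{e_j} \sim \mathrm{Unif}(C_{e_j})^n$: one must argue at the level of slots (so as to remain correct when $v$ has repeated elements) and ensure the orientation's defining probability is \emph{symmetric in $v$ and $u$} (it depends only on the shared dataset $C_e$ and the perturbation $z$), so that the same orientation serves both endpoints of the edge. Once this bookkeeping is in place, the two factors of $1/2$ (omission probability and orientation choice) combine to give the constant $1/4$ in the lemma.
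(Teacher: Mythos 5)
Your proof is correct and follows essentially the same route as the paper's: orient edges according to how $\bbA$ behaves on samples drawn from the shared $2n-1$ points, then lower-bound the risk on $P_v$ by conditioning on the test slot being omitted (probability $(1-\tfrac{1}{2n})^n\geq \tfrac12$) and using that the orientation already charges an out-slot at least $\tfrac12$, giving the $\tfrac14$. The only real difference is cosmetic: the paper defines the orientation slot-by-slot via $p_t(v)=\Pr_{S\sim P_v^n}[\exists z\in\calU(x_t):\bbA(S)(z)\neq y_t\mid (x_t,y_t)\notin S]$ and then argues separately that this is a valid orientation, whereas your per-edge rule based on $\Pr[\bbA(\hat S_e)(z)=-y_e]$ is symmetric in the two endpoints by construction, slightly streamlining the bookkeeping but not the underlying argument.
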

\begin{proof}
We begin with describing the orientation $\calO_\bbA$ by orienting edges incident on each vertex $v\in V_{2n}$. Consider an arbitrary vertex $v=\SET{(x_1,y_1),\dots, (x_{2n},y_{2n})}$ and let $P_v$ be a uniform distribution over $(x_1,y_1),\dots, (x_{2n},y_{2n})$. For each $1\leq t \leq 2n$, let $$p_t(v)=\Prob_{S\sim P_v^n} \insquare{\exists z\in \calU(x_t): \bbA(S)(z)\neq y_t | (x_t,y_t)\notin S}.$$
For each $(x_t,y_t)\in v$ that witnesses an edge, i.e. $\exists u \in V_{2n}, z\in \calX\text{ s.t. }(\SET{v, u}, z)\in E_{2n}$ and $(x_t,y_t)\in v\Delta u$, if $p_t < \frac{1}{2}$, then orient \emph{all} edges incident on $(x_t,y_t)$ inward, otherwise orient them arbitrarily. Note that this might yield edges that are oriented outwards from both their endpoint vertices, in which case, we arbitrarily orient such an edge. Observe also that we will not encounter a situation where edges are oriented inwards towards both their endpoints (which is an invalid orientation). This is because for any two vertices $v,u\in V_{2n}$ such that $\exists z_0\in \calX$ where $(\SET{u,v},z_0)\in E_{2n}$ and $v \Delta u = \SET{(x_t,y_t),(\Tilde{x}_t,-y_t)}$, we cannot have $p_{t}(v) < \frac{1}{2}$ and $p_t(u) < \frac{1}{2}$,
since
\[p_t(v) \geq \Prob_{S\sim P_v^m}\insquare{\bbA(S)(z_0)\neq y_t | (x_t,y_t)\notin S}~~\text{and}~~p_t(u) \geq \Prob_{S\sim P_u^m}\insquare{\bbA(S)(z_0)\neq -y_t | (\Tilde{x}_t,-y_t)\notin S},\]
and $P_v$ conditioned on $(x_t,y_t)\notin S$ is the same distribution as $P_u$ conditioned on $(\Tilde{x}_t,-y_t)\notin S$. This concludes describing the orientation $\calO_\bbA$. We now bound the adversarial out-degree of vertices $v\in V_{2n}$:
\begin{align*}
    &\outdeg(v;\calO_\bbA)\leq \sum_{t=1}^{2n}\ind\insquare{p_t\geq \frac{1}{2}} \leq 2\sum_{t=1}^{2n} p_t=2\sum_{t=1}^{2n} \Prob_{S\sim P^n} \insquare{\exists z\in \calU(x_t): \bbA(S)(z)\neq y_t | (x_t,y_t)\notin S}\\
    &=2 \sum_{t=1}^{2n} \frac{\Prob_{S\sim P^n} \insquare{\inparen{\exists z\in \calU(x_t): \bbA(S)(z)\neq y_t} \wedge (x_t,y_t)\notin S}}{\Prob_{S\sim P^n}\insquare{(x_t,y_t)\notin S}}\\
    &\leq 4 \sum_{t=1}^{2n} \Prob_{S\sim P^n} \insquare{\inparen{\exists z\in \calU(x_t): \bbA(S)(z)\neq y_t} \wedge (x_t,y_t)\notin S}
    = 4 \Ex_{S\sim P^n} \sum_{(x_t,y_t)\notin S} \ind\insquare{\exists z\in \calU(x_t): \bbA(S)(z)\neq y_t}\\ &\leq 4 \Ex_{S\sim P^n} \sum_{t=1}^{2n} \ind\insquare{\exists z\in \calU(x_t): \bbA(S)(z)\neq y_t} = 8n\Ex_{S\sim P^n} \Risk_\calU(\bbA(S);P) \leq 8n \calE_n(\bbA;\calH,\calU).
\end{align*}
Since the above holds for any vertex $v\in V_{2n}$, by rearranging terms, we get $\calE_n(\bbA;\calH,\calU)\geq \frac{1}{4}\frac{\max_{v\in V_{2n}} \outdeg(v;\calO_\bbA)}{2n}$.
\end{proof}

We are now ready to proceed with the proof of \prettyref{thm:optimal}.
\begin{proof}[of \prettyref{thm:optimal}]
By invoking \prettyref{lem:lowerbound-inductive}, we have that for any learner $\bbA$, 
\[\calE_n(\bbA;\calH,\calU)\geq \frac{1}{4}\frac{\max_{v\in V_{2n}} \outdeg(v;\calO_\bbA)}{2n}.\]
By \prettyref{eqn:optimal-orientation}, an orientation $\calO^{*}$ has smaller maximum adversarial out-degree, thus
\[\frac{1}{4}\frac{\max_{v\in V_{2n}} \outdeg(v;\calO_\bbA)}{2n} \geq \frac{1}{4}\frac{\max_{v\in V_{2n}} \outdeg(v;\calO^*)}{2n}.\]
By invoking \prettyref{lem:orientation-to-prediction}, it follows that our optimal learner $\bbG_{\calH,\calU}$ satisfies
        \[\frac{1}{4}\frac{\max_{v\in V_{2n}} \outdeg(v;\calO^*)}{2n} \geq \frac{\calE_{2n-1}(\bbG_{\calH,\calU};\calH,\calU)}{4}.\]
We arrive at the theorem statement by chaining the above inequalities and rearranging terms.
\end{proof}

\section{A complexity measure and sample complexity bounds}
\label{sec:complexity-measure}

In \prettyref{sec:optimal}, we showed how our \graph~yields a near-optimal learner for adversarially robust learning. We now turn to characterizing adversarially robust learnability. 

Across learning theory, many fundamental learning problems can be surprisingly characterized by means of combinatorial complexity measures. Such characterizations are often quantitatively insightful in that they provide tight bounds on the number of examples needed for learning, and also insightful for algorithm design. For example, for standard (non-robust) learning, the VC dimension characterizes what classes $\calH$ are PAC learnable \citep*{vapnik:71,vapnik:74,blumer:89,ehrenfeucht:89}. For multi-class learning, there are characterizations based on the Natarjan and Graph dimensions, and the Daniely-Shalev-Shwartz (DS) \citep*{natarajan:89, daniely:15, DBLP:journals/corr/abs-2203-01550}. For learning real-valued functions, the fat-shattering dimension plays a similar role \citep*{DBLP:journals/jacm/AlonBCH97, DBLP:journals/jcss/KearnsS94,DBLP:journals/siamcomp/Simon97}. The Littlestone dimension characterizes online learnability \citep*{DBLP:journals/ml/Littlestone87}, and the star number characterizes the label complexity of active learning \citep*{hanneke:15b}. 

\citet*{DBLP:journals/jmlr/Shalev-ShwartzSSS10} showed that in Vapnik's ``General Learning'' problem \citep{vapnik:82}, the loss class having finite VC dimension is sufficient but not, in general, necessary for learnability and asked whether there is another dimension that characterizes learnability in this setting. But recently, \citet*{DBLP:journals/natmi/Ben-DavidHMSY19} surprisingly exhibited a statistical learning problem that can not be characterized with a combinatorial VC-like dimension. In order to do so, they presented a \emph{formal} definition of the notion of ``dimension'' or ``complexity measure'' (see \prettyref{def:complexity-measure}), that all previously proposed dimensions in statistical learning theory comply with. This raises the following natural question:
\begin{center}
    {\em Is there a dimension that characterizes robust learnability, and if so, what is it?!}
\end{center}

\subsection{A dimension characterizing robust learning}
\label{sec:dimension-charac}
We present next a dimension for adversarially robust learnability, which is inspired by our \graph~described in \prettyref{sec:graph}. 
\begin{equation}
    \label{eqn:complexity-measure}
     \CM = \max\left\{ n\in \bbN \cup \SET{\infty} \middle|
  \begin{array}{l} \exists \text{ a finite subgraph }G=(V,E)\text{ of }G^{\calU}_{\calH}=(V_n,E_n) \text{ s.t. }\\
  \forall\text{ orientations }\calO\text{ of }G, \exists v\in V\text{ where }\outdeg(v;\calO)\geq \frac{n}{3}.
  \end{array}\right\}.
\end{equation}

In \prettyref{app:finite-char-property}, we discuss how our dimension satisfies the formal definition proposed by \citep{DBLP:journals/natmi/Ben-DavidHMSY19}. We now show that $\CM$ characterizes robust learnability \emph{qualitatively} and \emph{quantitatively}.

\begin{thm}[Qualitative Characterization]
\label{thm:realizable-qual}
For any class $\calH$ and any perturbation set $\calU$, $\calH$ is robustly PAC learnable with respect to $\calU$ \emph{if and only if} $\CM$ is finite.
\end{thm}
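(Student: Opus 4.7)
The plan is to prove the two directions of the characterization separately: for $(\Leftarrow)$, extract a weak robust learner from $\CM<\infty$ via a compactness argument and then boost; for $(\Rightarrow)$, rule out learnability from $\CM=\infty$ via a level-agnostic version of \prettyref{lem:lowerbound-inductive}.

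\textbf{Forward direction ($\CM<\infty\Rightarrow$ learnable).} Fix any $n>\CM$. By the very definition of $\CM$, every finite subgraph of $G^\calU_\calH=(V_n,E_n)$ admits an orientation whose maximum adversarial out-degree is strictly less than $n/3$. I would then apply a Tychonoff-style compactness argument on the product space of all orientations of $G^\calU_\calH$ (each factor being a 2-element vertex set, hence finite and compact): for each finite subgraph $G$ define the clopen cylinder set $F_G$ of orientations whose restriction to $G$ has max adversarial out-degree $<n/3$. By hypothesis each $F_G$ is nonempty, and they satisfy the finite-intersection property since the union of finitely many finite subgraphs is itself a finite subgraph, a good orientation of which restricts to a good orientation on each piece. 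Compactness then yields a global orientation $\calO^*$ of $G^\calU_\calH$ lying in $\bigcap_G F_G$. To conclude $\max_{v\in V_n}\outdeg(v;\calO^*)<n/3$ globally, observe that any vertex $v$ with $\outdeg(v;\calO^*)\geq n/3$ would admit, by picking one outgoing edge witnessed by each of $\lceil n/3\rceil$ distinct elements of $v$, a finite subgraph violating the defining property of its corresponding $F_G$. \prettyref{lem:orientation-to-prediction} now gives a learner $\bbA_{\calO^*}$ with $\calE_{n-1}(\bbA_{\calO^*};\calH,\calU)<1/3$, and I would upgrade this weak robust learner to strong accuracy $\eps$ via weak-to-strong robust boosting (e.g., adapting the aggregation scheme of \citet*{pmlr-v99-montasser19a} to the robust loss), establishing $\calM^{\re}_\eps(\calH,\calU)<\infty$ for every $\eps\in(0,1)$.

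\textbf{Backward direction ($\CM=\infty\Rightarrow$ not learnable).} Given any learner $\bbA$ and sample size $m$, since $\CM=\infty$ I would pick a level $n^*\geq 2m$ at which $G^\calU_\calH$ admits a finite bad subgraph $G$. A mild generalization of \prettyref{lem:lowerbound-inductive}---its proof carries through with the vertex coordinate count $2n$ replaced by any $n^*\geq 2m$, since the quantitative step $(1-1/n^*)^m\geq 1/2$ holds whenever $m\leq n^*/2$---produces an orientation $\calO_\bbA$ of $G^\calU_\calH$ at level $n^*$ with $\calE_m(\bbA;\calH,\calU)\geq C\cdot\max_{v\in V_{n^*}}\outdeg(v;\calO_\bbA)/n^*$ for an absolute constant $C>0$. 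Restricting $\calO_\bbA$ to $G$ yields an orientation of $G$, which by hypothesis exhibits some vertex with adversarial out-degree at least $n^*/3$; since restriction can only decrease adversarial out-degree, the same vertex has $\outdeg(v;\calO_\bbA)\geq n^*/3$ in the full graph. Substituting gives $\calE_m(\bbA;\calH,\calU)\geq C/3$, a positive constant independent of $m$, so $\calH$ is not robustly PAC learnable.

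\textbf{Main obstacle.} The central technical point will be the compactness step in the forward direction: one must carefully justify that the adversarial out-degree condition---an existential condition over possibly infinitely many neighbors at each vertex---transfers from every finite subgraph to the (possibly uncountable) global one-inclusion graph. A secondary subtlety is the lack of an obvious monotonicity of bad subgraphs across graph levels, which forces the use of a level-agnostic version of \prettyref{lem:lowerbound-inductive} in the backward direction rather than matching levels to $2m$ exactly.
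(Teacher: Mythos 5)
Your proof is correct and takes essentially the same route as the paper. The paper derives \prettyref{thm:realizable-qual} from the quantitative \prettyref{thm:realizable-bounds} via \prettyref{lem:expectation-bounds}; your forward direction mirrors the upper-bound half of that lemma, substituting a Tychonoff compactness argument on the product of two-element edge-endpoint sets for the paper's invocation of Rado's selection principle (\prettyref{lem:rado}) --- these are interchangeable compactness devices and yield the same global orientation with maximum adversarial out-degree at most $n/3$, which \prettyref{lem:orientation-to-prediction} converts into the same weak learner that the paper then boosts with $\alpha$-Boost and sample compression. Your backward direction is a streamlined, level-agnostic restatement of \prettyref{lem:lowerbound-inductive}: the paper instead develops the $\eps$-refined \prettyref{lem:lowerbound-eps} because it also needs the tight $\Omega(\CM/\eps)$ rate in \prettyref{thm:realizable-bounds}, but for the purely qualitative claim your version (replace the $2n$ coordinates by any $n^*\geq 2m$, note the probability bound $(1-1/n^*)^m\geq 1/2$, and use that a bad finite subgraph forces some vertex to have $\outdeg\geq n^*/3$ under the learner-induced orientation, since restricting an orientation to a subgraph can only decrease out-degrees) is both correct and slightly cleaner for this purpose.
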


\begin{thm} [Quantitative Characterization]
\label{thm:realizable-bounds}
For any class $\calH$ and any perturbation set $\calU$,
\[ \Omega\inparen{\frac{\CM}{\eps}}\leq \calM^\re_{\eps,\delta}(\calH,\calU) \leq O\inparen{\frac{\CM}{\eps}\log^2 \frac{\CM}{\eps} + \frac{\log(1/\delta)}{\eps}}.\]
\end{thm}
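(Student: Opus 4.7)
The plan is to derive the upper bound by a weak-to-strong reduction built on the optimal learner $\bbG_{\calH,\calU}$ from \prettyref{thm:optimal}, and the lower bound by a scaling argument on the witness subgraph guaranteed by finiteness of $\CM$.

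For the upper bound I would first show that $\bbG_{\calH,\calU}$ applied to $n_0 = \CM$ samples is a \emph{weak} learner with expected robust error strictly less than $1/3$. Since $n_0+1 > \CM$, the definition of $\CM$ guarantees every finite subgraph of $G^{\calU}_{\calH}$ at dataset-size $n_0+1$ admits an orientation with maximum adversarial out-degree $< (n_0+1)/3$. A topological compactness argument lifts this to the (possibly infinite) full graph: the orientation space $\prod_{e}\{u_e,v_e\}$ is compact in the product topology, each set $\{\calO : \outdeg(v;\calO) \leq k\}$ is closed, and for any finite $V' \subseteq V_{n_0+1}$ I would construct a full-graph orientation obeying the bound on all of $V'$ by combining a good orientation of the subgraph induced by $V'$ with a default that sends every edge incident to $V'$ inward to $V'$. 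The finite intersection property then delivers a global orientation, and \prettyref{lem:orientation-to-prediction} converts it into the weak learner. I would then boost this weak learner following \citet*{pmlr-v99-montasser19a}: run $\alpha$-Boost for $T = O(\log m)$ rounds on $m$ samples, each round resampling $n_0$ examples from the reweighted (still robustly realizable) distribution and invoking $\bbG_{\calH,\calU}$ with constant-factor amplification from expectation to high probability. The resulting $\MAJ(h_1,\dots,h_T)$ has zero empirical robust error and is determined by the $T n_0 = O(\CM\log m)$ training examples used across rounds, hence defines a sample compression scheme of size $k = O(\CM\log m)$ for the robust risk. The robust-compression generalization bound from \citet*{pmlr-v99-montasser19a} then gives, with probability $\geq 1-\delta$, $\Risk_\calU \leq O((k\log(m/k) + \log(1/\delta))/m)$; setting this $\leq \eps$ and solving for $m$ yields the claimed upper bound.

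For the lower bound I would let $d = \CM$ and fix the finite witness subgraph $G^* \subseteq G^{\calU}_{\calH}$ at dataset-size $d$ in which every orientation has some vertex with adversarial out-degree $\geq d/3$. Applying \prettyref{lem:lowerbound-inductive} with $2n = d$ to this witness subgraph gives that every learner $\bbA$ satisfies $\calE_{d/2}(\bbA;\calH,\calU) \geq \tfrac{1}{4}\cdot\tfrac{d/3}{d} = \tfrac{1}{12}$, witnessed by a distribution $P^*$ supported within a vertex of $G^*$. I would then mix $P^*$ with a single safe labeled example drawn from $P^*$'s support so that robust realizability under the same classifier is preserved: put $1-24\eps$ mass on the safe example and spread $24\eps$ mass uniformly across $P^*$'s $d$ points. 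A learner with $m$ samples sees only $\Theta(\eps m)$ hard-component examples in expectation; a conditioning plus Chernoff argument shows that if $\eps m \lesssim d$ then the conditional robust error on $P^*$ is $\Omega(1)$, translating to total robust error $\Omega(\eps)$ and giving $\calM^{\re}_{\eps,\delta}(\calH,\calU) = \Omega(\CM/\eps)$.

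The hardest part will be the compactness extension in the weak-learner step, because the full global one-inclusion graph may have vertices of infinite adversarial degree, so one cannot simply include ``all edges between $V'$'' in a finite subgraph before invoking the $\CM$ definition. The right setup is to work in the product-topology space of orientations, verify that $\{\calO : \outdeg(v;\calO) \leq k\}$ is closed, and for each finite $V' \subseteq V_n$ construct a full-graph orientation satisfying the bound on $V'$ by a careful inward orientation of every edge incident to $V'$; the finite intersection property then aggregates these across all finite $V'$. A secondary subtlety will be ensuring the boosting iterations preserve robust realizability of the reweighted sample so that the weak-learner guarantee applies at each round, and instantiating the sample-compression generalization bound in the robust-loss regime — both are handled by the machinery of \citet*{pmlr-v99-montasser19a} but require careful invocation.
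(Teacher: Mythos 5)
Your overall strategy matches the paper's: extend finite-subgraph orientations to the infinite graph by a compactness argument, convert the orientation to a weak learner via \prettyref{lem:orientation-to-prediction}, boost with $\alpha$-Boost and apply the robust sample-compression bound (\prettyref{lem:robust-compression}) for the upper bound, and use the orientation-based lower bound for the other direction. In the paper these pieces are packaged as \prettyref{lem:expectation-bounds}. Two points deserve comment.

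For the compactness step, the paper invokes Rado's selection principle (\prettyref{lem:rado}) at the level of finite \emph{edge} subsets of $E_n$: for each finite $A \subseteq E_n$ the subgraph touched by $A$ is automatically finite, the $\CM$ definition supplies a good orientation of it, and Rado glues these into a global orientation. This sidesteps the subtlety you flag. Your product-topology formulation is equivalent in spirit, but the finite-intersection step as you sketch it is incomplete: you cannot orient ``every edge incident to $V'$ inward to $V'$'' for edges joining two $V'$ vertices, and the subgraph induced on a finite $V'$ may have infinitely many edges because a single pair $\{u,v\}$ can be witnessed by infinitely many perturbations $z$ (the issue is not infinite adversarial degree, which is bounded by $n$, but infinitely many $z$-labels on a link). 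The fix is to observe that $\outdeg$ counts \emph{points}, so you may pick one representative $z$-edge per link among $V'$, obtain a finite subgraph, apply the $\CM$ bound there, and orient all parallel edges consistently with their representatives; your sketch gestures at this but does not close it. Rado's formulation avoids it entirely.

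For the lower bound, your route genuinely diverges from the paper's and, as written, has a gap. The paper does not apply \prettyref{lem:lowerbound-inductive} and then rescale; it proves a \emph{refined} variant, \prettyref{lem:lowerbound-eps}, in which the distribution $P_v$ used to build the orientation is already the $\eps$-weighted mixture ($1-\eps$ mass on one point of $v$, $\eps/(2n-1)$ on each of the rest). This yields $\calE_{n/\eps}(\bbA;\calH,\calU) \geq \frac{\eps}{2}\cdot\frac{\max_v\outdeg(v;\calO_\bbA)-1}{n-1}$ directly, and combining with a $\CM$ witness subgraph gives $\calE_{\CM/(2\eps)} \geq \eps/6$. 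Your two-step plan (constant lower bound from the uniform lemma, then post-hoc mixing with a safe example plus conditioning) runs into a circularity: the hard vertex produced by \prettyref{lem:lowerbound-inductive} depends on the learner, but after conditioning on seeing $k$ hard samples the relevant learner is $\bbA_k(T) = \bbB(T \text{ padded with } m-k \text{ safe copies})$, which is a different learner for each $k$ and is itself defined in terms of the safe example, which must lie in the support of the vertex you have not yet selected. No single vertex works for all conditional learners simultaneously. The repair is exactly what \prettyref{lem:lowerbound-eps} does: build the orientation, the witness vertex, and the hard distribution jointly for $\bbB$ at the target sample size $n/\eps$, rather than reducing to the uniform case after the fact.
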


\prettyref{thm:realizable-qual} follows immediately from \prettyref{thm:realizable-bounds}. To prove \prettyref{thm:realizable-bounds}, we first prove the following key Lemma which provides upper and lower bounds on the \emph{minimax} expected robust risk of learning a class $\calH$ \wrt a perturbation set $\calU$ (see \prettyref{eqn:exp-rob-risk}) as a function of our introduced dimension $\CM$. \prettyref{thm:realizable-bounds} follows from an argument to boost the robust risk and the confidence as appeared in \citet*{pmlr-v99-montasser19a}. The proofs are deferred to \prettyref{app:realizable-bounds}.

\begin{lem}
\label{lem:expectation-bounds}
For any class $\calH$, any perturbation set $\calU$, and any $\eps\in(0,1)$,
\begin{enumerate}
    \item $\forall n > \CM: \calE_{n-1}(\calH,\calU)\leq \frac{1}{3}$.
    \item $\forall 2 \leq n \leq \frac{\CM}{2}: \calE_{\frac{n}{\eps}}(\calH,\calU)\geq\frac{\eps}{6}$.
\end{enumerate}
\end{lem}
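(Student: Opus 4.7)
The plan is to bridge between orientations of the global one-inclusion graph and the combinatorial quantity $\CM$ from \prettyref{eqn:complexity-measure}: Part 1 via a compactness lift of finite-subgraph orientations together with \prettyref{lem:orientation-to-prediction}, and Part 2 via a variant of \prettyref{lem:lowerbound-inductive} with the vertex-distribution rescaled to match the larger sample size $n/\eps$.

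For Part 1, the hypothesis $n>\CM$ directly negates the definition of $\CM$ and yields that every finite subgraph of $G^\calU_\calH = (V_n,E_n)$ admits an orientation with maximum adversarial out-degree strictly less than $n/3$. I lift this to an orientation of the full (possibly infinite) graph by a compactness argument: equip the space of orientations $\Omega = \{0,1\}^{E_n}$ with the product topology, so $\Omega$ is compact by Tychonoff. For each vertex $v \in V_n$, the set $C_v = \{\calO \in \Omega : \outdeg(v;\calO) < n/3\}$ is closed, because ``out-degree $< n/3$'' can be rewritten as a finite union over sufficiently large subsets $T \subseteq v$ of the condition ``every $(x,y) \in T$ witnesses only inward edges'', which is itself an intersection of clopen single-edge conditions. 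To verify the finite intersection property, given any finite $F \subseteq V_n$ I take the finite subgraph $(F, E_F)$ with $E_F$ the edges internal to $F$, invoke the hypothesis to orient it with max out-degree $<n/3$, and extend to $\Omega$ by directing every edge with exactly one endpoint in $F$ inward to that endpoint (and every other edge arbitrarily); this extension preserves $\outdeg(v)$ for each $v \in F$. Tychonoff then gives an orientation $\calO^\star$ of $G^\calU_\calH$ with max adversarial out-degree $<n/3$, and \prettyref{lem:orientation-to-prediction} converts it into a learner witnessing $\calE_{n-1}(\calH,\calU) \le (n/3)/n = 1/3$.

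For Part 2, since $\CM \ge 2n$ there is a finite subgraph $G=(V,E)$ of $G^\calU_\calH = (V_{2n},E_{2n})$ in which every orientation has some vertex of adversarial out-degree $\ge 2n/3$. Fix any learner $\bbA$ of sample size $m=n/\eps$. The plan is to repeat the learner-to-orientation construction of \prettyref{lem:lowerbound-inductive} but with a distribution calibrated to $m$: for each $v=\{(x_1,y_1),\dots,(x_{2n},y_{2n})\}\in V$, define the robustly realizable mixture $P_v^\eps$ placing mass $\eps/(2n)$ on each point of $v$ and the remaining $1-\eps$ on the ``anchor'' $(x_1,y_1) \in v$. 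Set $p_t(v) = \Pr_{S\sim(P_v^\eps)^m}[\exists z\in\calU(x_t):\bbA(S)(z)\ne y_t \mid (x_t,y_t)\notin S]$ and orient any edge of $G$ witnessed by $(x_t,y_t)\in v$ inward to $v$ whenever $p_t(v)<1/2$; the conditional-coupling observation from \prettyref{lem:lowerbound-inductive} again forces $p_t(v)+p_t(u)=1$ along any edge $\{v,u\}$ with $v\Delta u = \{(x_t,y_t),(\tilde{x}_t,-y_t)\}$, so the rule is consistent.

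For the quantitative bound, Bernoulli's inequality gives $\Pr_S[(x_t,y_t)\notin S]\ge(1-\eps/(2n))^{n/\eps}\ge 1/2$ for every non-anchor $t$, so $p_t(v)\le 2\Pr_S[\exists z\in\calU(x_t):\bbA(S)(z)\ne y_t]$. Combining $\outdeg(v;\calO_\bbA)\le 2\sum_t p_t(v)$ with $\Ex_S\Risk_\calU(\bbA(S);P_v^\eps)\ge\frac{\eps}{2n}\sum_t\Pr_S[\exists z\in\calU(x_t):\bbA(S)(z)\ne y_t]$ yields $\outdeg(v;\calO_\bbA)\le(8n/\eps)\calE_m(\bbA)$. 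The finite-subgraph property forces some vertex to have out-degree $\ge 2n/3$, so this produces $\calE_m(\bbA)\ge\eps/12$, and a minor tightening (e.g.\ sharpening the Bernoulli step to $(1-\eps/(2n))^{n/\eps}\ge e^{-1/2}$, or absorbing the anchor's one-datapoint loss into the constant) recovers the stated $\eps/6$. The main obstacle is the compactness step in Part 1: one needs both closedness of $C_v$ even when a vertex has infinitely many incident edges, and a finite-subgraph orientation extension that does not inflate the out-degrees of the targeted $v\in F$ --- the decisive device being to direct every crossing edge inward to $F$.
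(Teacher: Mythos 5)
Your proof is essentially the paper's own: Part 1 uses the same compactness lift (the paper invokes Rado's Selection Principle, \prettyref{lem:rado}, which your Tychonoff argument with the closed sets $C_v$ and the inward-crossing-edge extension for the finite intersection property is re-deriving), and Part 2 is exactly the paper's \prettyref{lem:lowerbound-eps} with the anchor-weighted distribution and learner-induced orientation. One small correction to your tightening remark: $(1-\eps/(2n))^{n/\eps}\ge e^{-1/2}$ is false (in fact $\le e^{-1/2}$, while Bernoulli's $\ge \tfrac12$ is the valid lower bound); the paper reaches $\eps/6$ instead by assigning each of the $2n-1$ non-anchor points mass $\tfrac{\eps}{2n-1}$, separating the anchor via $\outdeg(v;\calO_\bbA)\le 1 + 2\sum_{t\ge 2}p_t$, and concluding $\calE_{n/\eps}\ge \tfrac{\eps}{3}\bigl(1-\tfrac{1}{2n-2}\bigr)\ge\tfrac{\eps}{6}$ for $n\ge 2$.
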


\subsection{Examples}

We discuss a few ways of estimating or calculating our proposed dimension $\CM$.

\begin{prop}
\label{prop:dimension-upper-bounds}
For any class $\calH$ and perturbation set $\calU$:$$\CM \leq \min\SET{\Tilde{O}(\vc(\calH)\vc^*(\calH)), \Tilde{O}(\vc(\calL^\calU_\calH))},$$
where $\vc^*(\calH)$ denotes the \emph{dual} VC dimension, and $\vc(\calL^\calU_\calH))$ denotes the VC dimension of the robust-loss class $\calL_\calH^\calU = \SET{(x,y)\mapsto \sup_{z\in \calU(x)}\ind[h(z\neq y)]:h\in\calH}$.
\end{prop}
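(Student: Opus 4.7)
The plan is to establish the two upper bounds separately and take their minimum. Both routes pass through the sample-complexity lower bound $\calM^\re_\eps(\calH,\calU) \geq \Omega(\CM/\eps)$ supplied by Theorem~\ref{thm:realizable-bounds} (equivalently, the lower-bound half of Lemma~\ref{lem:expectation-bounds}).

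For the first bound, let $d = \vc(\calL^\calU_\calH)$. Under robust realizability, every sample $S$ admits some $h\in\calH$ whose empirical robust loss on $S$ is zero, so the learner $\bbA_{\mathrm{ERM}}$ that returns any such $h$ is a valid realizable PAC learner for the $\{0,1\}$-valued robust-loss class $\calL^\calU_\calH$. Standard Vapnik--Chervonenkis uniform convergence for $\{0,1\}$-loss classes then yields
\[
\calE_n(\bbA_{\mathrm{ERM}};\calH,\calU) \;=\; O\!\Bigl(\tfrac{d \log n}{n}\Bigr),
\qquad\text{i.e.,}\qquad
\calM^\re_\eps(\calH,\calU) \;\leq\; O\!\Bigl(\tfrac{d}{\eps}\log\tfrac{d}{\eps}\Bigr).
\]
Combining with $\calM^\re_\eps(\calH,\calU) \geq \Omega(\CM/\eps)$ at a fixed constant $\eps$ (any $\eps \leq 1/12$ is permitted by Lemma~\ref{lem:expectation-bounds}) cancels the $1/\eps$ factors and delivers $\CM \leq O(d\log d) = \Tilde{O}(d)$.

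For the second bound, combine the first with the classical estimate $\vc(\calL^\calU_\calH) \leq \Tilde{O}(\vc(\calH)\cdot\vc^*(\calH))$, which is a known bound on the VC dimension of adversarially robust loss classes. The underlying argument is a double Sauer--Shelah / dual-shattering computation: if $\{(x_1,y_1),\dots,(x_n,y_n)\}$ is shattered by $\calL^\calU_\calH$, choose $2^n$ witness hypotheses $\{h_b\}_{b\in\{0,1\}^n}\subseteq\calH$; primal Sauer--Shelah on $\calH$ bounds, for each $i$, the number of distinct restrictions of the $h_b$ to $\calU(x_i)$, while dual Sauer--Shelah (driven by $\vc^*(\calH)$) bounds the number of distinct ``existential-disagreement profiles'' over the family $\{\calU(x_i)\}_{i=1}^n$ realizable by $\calH$. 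Requiring the product of these counts to cover all $2^n$ witnesses forces $n = \Tilde{O}(\vc(\calH)\vc^*(\calH))$.

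The main obstacle is the dual Sauer--Shelah step: executing it rigorously with the correct polylog factors is delicate, and the cleanest path is to invoke the existing VC-dimension estimate on robust-loss classes from the literature and feed it into the first step, rather than re-derive it from scratch.
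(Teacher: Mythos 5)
Your handling of the $\Tilde{O}(\vc(\calL^\calU_\calH))$ bound is correct and matches the paper's route in substance: combine the lower bound $\calM^{\re}_{\eps_0}(\calH,\calU)\geq\Omega(\CM)$ from \prettyref{thm:realizable-bounds} at a fixed constant $\eps_0$ with the sample-complexity upper bound achieved by empirical robust-risk minimization together with standard VC uniform convergence for the $\{0,1\}$-valued class $\calL^\calU_\calH$. The paper cites \citet*[Theorem 1]{DBLP:conf/nips/CullinaBM18} for this latter step rather than re-deriving it, but the content is the same.

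Your route to the $\Tilde{O}(\vc(\calH)\vc^*(\calH))$ bound, however, has a genuine gap. You propose to prove $\vc(\calL^\calU_\calH) \leq \Tilde{O}(\vc(\calH)\vc^*(\calH))$ and then feed that into the first bound, but this inequality is \emph{false} in general. \citet*{pmlr-v99-montasser19a} exhibit classes $\calH$ with $\vc(\calH)=1$ (hence $\vc^*(\calH)$ finite) and a perturbation set $\calU$ for which $\vc(\calL^\calU_\calH)=\infty$ and robust ERM fails entirely; this is precisely the phenomenon that motivates their compression-based approach. Consequently the proposed ``double Sauer--Shelah / dual-shattering'' argument cannot be repaired: there is no finite bound on $\vc(\calL^\calU_\calH)$ in terms of primal and dual VC dimensions of $\calH$. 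The correct route, as in the paper, is to invoke the sample-complexity upper bound $\calM^{\re}_{\eps_0}(\calH,\calU)\leq\Tilde{O}(\vc(\calH)\vc^*(\calH))$ of \citet*[Theorem 4]{pmlr-v99-montasser19a} directly; that bound is established by a sample-compression / majority-vote learner and does \emph{not} pass through uniform convergence of the robust loss class. Combining this with the $\Omega(\CM)$ lower bound then gives the second half of the proposition.
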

\begin{proof}
Set $\eps_0=\frac{1}{3}$. We know from \prettyref{thm:realizable-bounds} that $\calM^\re_{\eps_0}(\calH,\calU)\geq \Omega(\CM)$. We also know from \citep*[Theorem 4 in][]{pmlr-v99-montasser19a} that $\calM^\re_{\eps_0}(\calH,\calU)\leq \Tilde{O}(\vc(\calH)\vc^*(\calH))$. Finally, we know from \citep*[Theorem 1 in][]{DBLP:conf/nips/CullinaBM18} that $\calM^\re_{\eps_0}(\calH,\calU)\leq \Tilde{O}(\vc(\calL^\calU_\calH))$. Combining these together yields that stated bound.
\end{proof}

The dual VC dimension satisfies: $\vc^{*}(\calH) < 2^{\vc(\calH)+1}$ \citep{assouad:83}, and this exponential dependence is tight for some classes. For many natural classes, however, such as linear predictors and some neural networks \citep[see Lemma 3.2 in ][]{DBLP:conf/nips/MontasserHS20}, the primal and dual VC dimensions are equal, or at least polynomially related. Using \prettyref{prop:dimension-upper-bounds}, we can conclude that for such classes $\CM\leq {\rm poly}(\vc(\calH))$, specifically, for $\calH$ being linear predictors, $\CM \leq \Tilde{O}(\vc^2(\calH))$. Furthermore, for $\calH$ being linear predictors and $\calU=\ell_p$ perturbations, we know that $\vc(\calL_\calH^\calU)=O(\vc(\calH))$ \citep*[Theorem 2 in ][]{DBLP:conf/nips/CullinaBM18}, and so using \prettyref{prop:dimension-upper-bounds} again, we get a tighter bound for these $\ell_p$ perturbations $\CM\leq \Tilde{O}(\vc(\calH))$. 

While \prettyref{prop:dimension-upper-bounds} is certainly useful for estimating our dimension $\CM$, we get vacuous bounds when the VC dimension $\vc(\calH)$ is infinite. To this end, recall the $(\calH,\calU)$ examples in \prettyref{exam:all-labels} and \prettyref{exam:halfspaces} mentioned in \prettyref{sec:intro}, which satisfy $\vc(\calH)=\infty$. We can calculate $\CM$ for these examples differently. In particular, in \prettyref{exam:all-labels}, by definition, the \graph~$G^\calU_\calH=(V_n,E_n)$ has no edges when $n>1$ because $\calU(x)=\calX$ and thus $\CM\leq 1$. In \prettyref{exam:halfspaces}, we get that $\CM \leq \Tilde{O}(d)$ since we can robustly learn with $O(d)$ samples, but we can also calculate $\CM$ directly by constructing the \graph~$G_\calH^\calU = (V_n, E_n)$ for $n>3d$ and observing that we can orient $G_\calH^\calU$ such that the adversarial out-degree is at most $d$, which is possible because of the definition of $\calU$.

\subsection{Conjectures}
\label{sec:conjectures}
While we have shown that our proposed dimension $\CM$ in \prettyref{eqn:complexity-measure} characterizes robust learnability, we believe that there are other {\em equivalent} dimensions that are \emph{simpler} to describe. For a more appealing dimension, we may take inspiration from recent progress on multi-class learning \citep{DBLP:journals/corr/abs-2203-01550}, where it was shown that the DS dimension due to \citet{DBLP:conf/colt/DanielyS14} characterizes multi-class learnability. For a class $\calH\subseteq \calY^\calX$ ($\abs{\calY}>2$), the DS dimension corresponds to the largest $n$ s.t.~there exists points $P\in\calX^n$ where the projection of $\calH$ onto $P$ induces a one-inclusion hyper graph where every vertex has full-degree. This inspires the \emph{full-degree} dimension of the \graph:
\[ {\mathfrak{FD}}_{\calU}(\calH)=\max\left\{ n\in \bbN \cup \SET{\infty} \middle|
  \begin{array}{l} \exists \text{ a finite subgraph }G=(V,E)\text{ of }G^{\calU}_{\calH}=(V_n,E_n) \text{ s.t. every}\\
  \text{vertex has full-degree: }\forall v\in V, \adeg(v;E)\geq n.
  \end{array}\right\}.\]
This complexity measure avoids orientations, and thus, it is perhaps simpler to verify $``\mathfrak{FD}_{\calU}(\calH)\geq d"$ than $``\mathfrak{D}_{\calU}(\calH)\geq d"$. Furthermore, when $\calU(x)=\SET{x}\forall x\in \calX$, the full-degree dimension, $\mathfrak{FD}_{\calU}(\calH)$, corresponds exactly to the VC dimension of $\calH$, $\vc(\calH)$.
\begin{conj}
\label{conj:fulldegree-measure}
For any class $\calH$ and perturbation set $\calU$, $\calM^\re_{\eps,\delta}(\calH,\calU) = \Theta_{\eps,\delta}\inparen{{\mathfrak{FD}}_{\calU}(\calH)}$.
\end{conj}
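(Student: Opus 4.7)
I would prove the two sides of Conjecture~\ref{conj:fulldegree-measure} separately: a lower bound $\calM^{\re}_{\eps,\delta}(\calH,\calU)\geq \Omega_{\eps,\delta}(\mathfrak{FD}_{\calU}(\calH))$ via an adversarial construction on a full-degree subgraph, and an upper bound $\calM^{\re}_{\eps,\delta}(\calH,\calU)\leq \tilde{O}_{\eps,\delta}(\mathfrak{FD}_{\calU}(\calH))$ via a list-learning argument in the style of \citet{DBLP:journals/corr/abs-2203-01550}. The lower bound is straightforward; the upper bound is the main obstacle.

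\textbf{Lower bound.} Suppose $\mathfrak{FD}_{\calU}(\calH)\geq n$ and fix a finite subgraph $G=(V,E)$ of $G^{\calU}_{\calH}$ in which every vertex has full adversarial degree $n$. Pick any vertex $v=\SET{(x_1,y_1),\ldots,(x_n,y_n)}\in V$ and let $P_v$ be the uniform distribution on its labeled pairs, which is robustly realizable by construction. For any learner $\bbA$ with $m<n/2$ i.i.d.\ samples $S\sim P_v^m$, a constant fraction of the datapoints of $v$ are unseen in expectation. For each unseen $(x_t,y_t)$, the full-degree property produces an adjacent vertex $u\in V$ containing a flipped copy $(\tilde x_t,-y_t)$ and a common perturbation $z\in\calU(x_t)\cap\calU(\tilde x_t)$; conditioned on $(x_t,y_t)\notin S$, $P_v$ and $P_u$ induce the same distribution over $S$, so $\bbA$ must err at $z$ under at least one of them. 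An averaging argument mirroring the proof of \prettyref{lem:lowerbound-inductive} then yields expected robust risk $\Omega(1)$, giving the claimed $\Omega(\mathfrak{FD}_{\calU}(\calH))$ sample complexity lower bound.

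\textbf{Upper bound and main obstacle.} For the upper bound, the plan is to adapt the list-learning and boosting machinery of \citet{DBLP:journals/corr/abs-2203-01550}, who proved that the DS dimension---the multi-class analogue of $\mathfrak{FD}_{\calU}(\calH)$---characterizes multi-class PAC learnability. Translated to our setting, I would (i) build a robust \emph{list} learner that, on $\mathrm{poly}(\mathfrak{FD}_{\calU}(\calH))/\eps$ samples, outputs a short list of hypotheses at least one of which has robust risk at most $\eps$, by running a variant of \prettyref{alg:orientation-to-learner} on random subsamples and aggregating the resulting orientations over $\calU$-neighborhoods; and (ii) boost the list learner to a single robust predictor via a global disambiguation vote that exploits the structure of $\calU$ at test time, in the spirit of \prettyref{alg:orientation-to-learner}. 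The principal difficulty is that the combinatorial backbone of the multi-class argument---in particular the pseudo-cube packing and Sauer-Shelah-type bounds controlling list size---rests on one-inclusion neighborhoods being coordinate projections and hence having a product-like structure, whereas our neighborhoods are defined by overlaps $\calU(x)\cap\calU(\tilde x)$ that can interact in genuinely non-product ways across different datapoints. Re-engineering these structural lemmas to respect arbitrary $\calU$-overlaps, while keeping the list size $\mathrm{poly}(\mathfrak{FD}_{\calU}(\calH))$, is where I expect the bulk of the technical effort to lie.

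\textbf{Alternative route.} If the direct list-learning argument proves unwieldy, I would fall back on a purely combinatorial comparison $\CM\leq \mathrm{poly}(\mathfrak{FD}_{\calU}(\calH))$ and invoke \prettyref{thm:realizable-bounds} to conclude. The reverse inequality $\mathfrak{FD}_{\calU}(\calH)\leq O(\CM)$ should follow from a handshake-style double-counting argument on a full-degree subgraph, after a preliminary sparsification that preserves full degree (needed because $\adeg$ and $\outdeg$ count distinct datapoints rather than edges, so the naive handshake does not directly apply). However the forward inequality appears to require exactly the same list-learning technology as the direct approach, so the main obstacle is unchanged regardless of route.
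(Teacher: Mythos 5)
\prettyref{conj:fulldegree-measure} is explicitly stated as an \emph{open conjecture} in the paper: the authors offer no proof of either direction and merely draw an analogy to the DS dimension in multi-class learning \citep{DBLP:journals/corr/abs-2203-01550}. So there is no paper proof to match your proposal against; what remains is to assess whether your plan closes the gap, and I don't think it does yet.

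Your lower bound contains a real gap. You fix a full-degree subgraph, pick a vertex $v$, put the uniform distribution $P_v$ on its $n$ pairs, and argue that for each unseen $(x_t,y_t)$ there is an adjacent $u$ with a flipped copy and shared perturbation $z$, so the learner must err at $z$ under $P_v$ or $P_u$. That per-coordinate dichotomy is correct, but the adjacent vertex $u$ changes with $t$, so you cannot directly assemble these into a single hard distribution. The machinery in \prettyref{lem:lowerbound-inductive} that you invoke handles exactly this by converting the learner into an \emph{orientation} and then bounding $\calE_n(\bbA;\calH,\calU)$ below by $\max_v \outdeg(v;\calO_\bbA)$; to conclude, one needs the combinatorial fact that \emph{every} orientation of the full-degree subgraph has some vertex with $\outdeg = \Omega(n)$. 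That fact is precisely the inequality $\mathfrak{FD}_{\calU}(\calH) \leq O(\CM)$, and it is not obvious: because $\outdeg$ and $\adeg$ count distinct datapoints rather than edges, a single datapoint can absorb arbitrarily many incident edges, and the naive handshake lemma only bounds the average \emph{edge}-out-degree, not $\outdeg$. You acknowledge this in your ``alternative route'' and propose a sparsification, but the sparsification itself is nontrivial (keeping one edge per (vertex, datapoint) pair at $v$ forces a choice at the other endpoint $u$ that may conflict with $u$'s preferred edge), and you leave it at the level of a suggestion. Until this combinatorial lemma is proved, the lower bound is incomplete.

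The upper bound $\calM^{\re}_{\eps,\delta}(\calH,\calU) \leq \tilde O_{\eps,\delta}(\mathfrak{FD}_{\calU}(\calH))$ you yourself flag as the main obstacle, and I agree: the pseudo-cube and Sauer--Shelah machinery of \citet{DBLP:journals/corr/abs-2203-01550} leans heavily on one-inclusion neighborhoods arising as coordinate projections, whereas here adjacency is mediated by the overlap sets $\calU(x)\cap\calU(\tilde x)$, which have no product structure. Your fallback of proving $\CM \leq \mathrm{poly}(\mathfrak{FD}_{\calU}(\calH))$ and invoking \prettyref{thm:realizable-bounds} is not a simplification, since it requires orienting arbitrary subgraphs of $G^\calU_\calH$ with max $\outdeg$ controlled by $\mathfrak{FD}_{\calU}(\calH)$, which is the same technical core. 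In short, your proposal identifies the right circle of ideas---the very ones the paper gestures at---but both directions rest on unproved structural lemmas, and the statement remains, as the paper says, a conjecture.
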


\citet*{pmlr-v99-montasser19a} proposed the following combinatorial robust shattering dimension, denoted $\dim_\calU(\calH)$, and showed that $\calM^\re_{\eps,\delta}(\calH,\calU)\geq \Omega_{\eps,\delta}(\dim_\calU(\calH))$. 
\begin{defn}[Robust Shattering Dimension]
\label{def:robustshatter-dim-orig}
A sequence $z_1,\ldots,z_k \in \calX$ is said to be \emph{$\calU$-robustly shattered} by $\calH$ if $\exists x^+_1,x^-_1,\dots,x^{+}_k,x^{-}_k\in\calX$ s.t. $\forall i\in[k], z_i\in\calU(x^+_i)\cap\calU(x^-_i)$ and $\forall y_1,\dots,y_k \in \SET{\pm 1}:\exists h\in \calH$ such that $h(z')=y_i \forall z'\in \calU(x^{y_i}_i) \forall 1\leq i\leq k$. The \emph{$\calU$-robust shattering dimension} $\dim_{\calU}(\calH)$ is defined as the largest $k$ for which there exist $k$ points $\calU$-robustly shattered by $\calH$.
\end{defn}
In regards to the relationship between the robust shattering dimension $\dim_\calU(\calH)$ and our dimension $\CM$, we conjecture that our dimension can be arbitrarily larger. In other words, we conjecture that the robust shattering dimension $\dim_\calU(\calH)$ does not characterize robust learnability.

\begin{conj}
\label{conj:robust-shattering}
$\forall n\in \bbN, \exists \calX, \calH,\calU$, such that $\dim_{\calU}(\calH)=O(1)$ but $\CM\geq n$.
\end{conj}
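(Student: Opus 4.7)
The plan is to prove Conjecture~\ref{conj:robust-shattering} by exhibiting, for each $n \in \bbN$, a concrete triple $(\calX, \calH, \calU)$ witnessing the separation: $\dim_\calU(\calH) \leq c$ for an absolute constant $c$, while $\CM \geq n$. I would split the argument into three parts: the construction, the upper bound on robust shattering dimension, and the lower bound on $\CM$.

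For the lower bound on $\CM$, the most direct route is via \prettyref{thm:realizable-bounds}: it suffices to exhibit a family of robustly realizable distributions on $(\calX, \calH, \calU)$ for which every learner requires $\Omega(n)$ samples to achieve robust risk below some absolute constant $\eps_0$, since then $\calM^\re_{\eps_0}(\calH, \calU) \geq \Omega(n)$ forces $\CM \geq \Omega(n)$. A more elegant and self-contained route would be to directly exhibit a finite subgraph $G = (V, E)$ of $G^\calU_\calH = (V_n, E_n)$ such that every orientation of $G$ forces some vertex $v \in V$ to have adversarial out-degree at least $n/3$, via a pigeonhole or double-counting argument that contrasts the number of edge-incident datapoints with the number of distinct orientations.

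For the upper bound on $\dim_\calU(\calH)$, the key observation is that robustly shattering $k$ points requires, for every one of the $2^k$ sign patterns $y$, a single hypothesis in $\calH$ that is simultaneously constant with value $y_i$ on the \emph{entire} perturbation set $\calU(x_i^{y_i})$ for all $i$. This ``uniform labeling over perturbation sets'' is a stringent obstruction. I would design $\calH$ so that each hypothesis is jointly constant on only a bounded number of disjoint perturbation sets, capping $\dim_\calU(\calH)$ at a constant independent of $n$.

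Construction proposal: take $\calX$ as a disjoint union of ``gadgets'' indexed by $[n]$, each gadget a small set of points with overlapping $\calU$-sets chosen so that the induced subgraph of $G^\calU_\calH$ has high adversarial out-degree under every orientation. The class $\calH$ would use a pseudo-random or explicit ``locally free, globally constrained'' structure, in the spirit of the partial-concept-class gap constructions of \citet{DBLP:conf/focs/AlonHHM21} and the separations between combinatorial dimensions in multiclass learning \citep{DBLP:journals/corr/abs-2203-01550}. The main obstacle will be the construction itself: simultaneously forcing complex global $G^\calU_\calH$ structure while blocking robust shattering is delicate because both quantities measure ``how expressive $\calH$ is under $\calU$'' and tend to align on natural examples such as halfspaces with $\ell_p$ perturbations. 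I expect the final construction to be randomized and the analysis to rely on careful case work over orientations, possibly invoking entropy or discharging arguments to rule out low-out-degree orientations on the witnessing subgraph.
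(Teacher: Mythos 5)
Be aware that the statement you are addressing is Conjecture~\ref{conj:robust-shattering}: the paper presents \emph{no proof} of it, only an informal analogy to the multi-class separation where the Natarajan dimension fails to characterize learnability while the DS dimension does \citep{DBLP:journals/corr/abs-2203-01550}, with $\dim_\calU(\calH)$ and the full-degree dimension ${\mathfrak{FD}}_{\calU}(\calH)$ playing the respective roles of ``cube'' and ``pseudo-cube''. There is therefore no paper proof to compare against, and a complete argument here would be a genuinely new result.

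Your outline is a reasonable framing and points in the same direction the authors themselves gesture at, but it does not constitute a proof and has a gap exactly where the difficulty lies. The two routes you describe for lower-bounding $\CM$ are both valid in principle (reduction through \prettyref{thm:realizable-bounds}, or directly exhibiting a finite subgraph of $G^\calU_\calH$ on which every orientation forces adversarial out-degree at least $n/3$), and your diagnosis of what keeps $\dim_\calU(\calH)$ small --- that $\calU$-robust shattering requires a \emph{single} hypothesis to be constant on each \emph{entire} perturbation set $\calU(x_i^{y_i})$ simultaneously, per Definition~\ref{def:robustshatter-dim-orig} --- is the right obstruction to try to exploit. But the actual triple $(\calX,\calH,\calU)$ is absent: ``a disjoint union of gadgets'' with a ``pseudo-random or locally free, globally constrained'' class names a desideratum, not an object, and as you yourself concede, the construction is precisely the obstacle. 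Without it, neither the claim that every orientation of the witnessing subgraph has out-degree $\geq n/3$ nor the claim that $\dim_\calU(\calH) = O(1)$ can be verified, and those two claims are the entire content of the conjecture. Note also a subtlety your plan underestimates: the paper's own \prettyref{thm:realizable-qual} shows that finiteness of $\CM$ is equivalent to robust learnability, while \citet{pmlr-v99-montasser19a} already show $\calM^\re_{\eps,\delta}(\calH,\calU) = \Omega_{\eps,\delta}(\dim_\calU(\calH))$; so the separation you seek is genuinely the assertion that $\dim_\calU(\calH)$ can be bounded while the sample complexity itself is arbitrarily large, which is a strong statement, and ``disjoint gadgets'' will not do --- each gadget contributing independently to both quantities would likely make $\dim_\calU(\calH)$ grow with $n$ as well. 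Any honest attempt should start by making precise an adaptation of the multi-class pseudo-cube constructions, and should expect that to be hard; as it stands, the plan identifies the right ingredients and prior work but the decisive step is a placeholder.
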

We find this to be analogous to a separation in multi-class learnability, where the Natarajan dimension was shown to not characterize multi-class learnability \citep{DBLP:journals/corr/abs-2203-01550}. Because in both graphs, the one-inclusion hyper graph and our \graph, the Natarajan and robust shattering dimensions represent a ``cube'' in their corresponding graph, while the DS dimension and our full-degree dimension represent a ``pseudo-cube'' in the terminology of \citet{DBLP:journals/corr/abs-2203-01550}.

Another interesting and perhaps useful direction to explore is the relationship between our proposed complexity measure $\CM$ and the VC dimension. We believe that it is actually possible to orient the \graph~such that the maximum adversarial out-degree is at most $O(\vc(\calH))$. 

\begin{conj}
\label{conj:vc-upper-bound}
For any class $\calH$ and perturbation set $\calU$, $\CM \leq O(\vc(\calH))$.
\end{conj}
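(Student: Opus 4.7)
Unrolling the definition of $\CM$, proving $\CM \leq C\cdot \vc(\calH)$ for some absolute constant $C$ is equivalent to showing that for every $n > 3C\cdot \vc(\calH)$ and every finite subgraph $G=(V,E)\subseteq G^\calU_\calH=(V_n,E_n)$, there exists an orientation $\calO$ of $G$ with $\max_{v\in V} \outdeg(v;\calO) < n/3$. The plan is to attack this in the spirit of the Haussler--Littlestone--Warmuth (HLW) proof that classical one-inclusion graphs admit orientations of max out-degree at most $\vc(\calH)$: first, bound the \emph{average} adversarial degree on every finite sub-subgraph by $O(\vc(\calH))$; and then convert the averaged bound into a max out-degree bound by a Hakimi/flow-style balancing lemma. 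Via \prettyref{thm:realizable-bounds}, proving $\CM = O(\vc(\calH))$ is logically equivalent to obtaining an $\tilde O(\vc(\calH)/\eps)$ sample complexity upper bound for realizable adversarially robust PAC learning --- closing the currently known exponential $2^{\vc(\calH)}$ gap all the way to linear --- so I expect this to be genuinely hard, and I will flag the main obstacle accordingly.

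The central technical step is the density lemma: for every finite subgraph $H=(V',E')\subseteq G_\calH^\calU$,
\[\sum_{v \in V'} \adeg_H(v)\;\leq\; 2\,\vc(\calH)\cdot |V'|.\]
Because each collapsed edge $\{u,v\}$ (identifying multi-edges $(\{u,v\},z)$ across different witnesses $z$) contributes at most $2$ to the LHS, it suffices to bound the number of unordered vertex pairs in $V'$ that share an edge. To do so, I would fix a realizing hypothesis $h_v\in\calH$ per vertex and, for each such pair $\{u,v\}$ with symmetric difference $\SET{(x,y),(\tilde x,-y)}$, pick a single witness $z_{uv}\in \calU(x)\cap \calU(\tilde x)$. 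Collecting the $z_{uv}$'s into a set $Z$ and restricting $\calH$ to $Z$ reduces the count to a Sauer--Shelah-type bound on $|\calH|_Z|$, possibly combined with the classical HLW edge-count applied to the ``intra-level'' slices of $G$ on which the underlying $x$-multisets agree.

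Given the density lemma, the second step is standard: since every sub-subgraph of $G$ is itself a finite subgraph of $G_\calH^\calU$ and hence satisfies the same density bound, a Hakimi/max-flow argument will orient $G$ with adversarial out-degree at most $O(\vc(\calH))$ at every vertex; choosing $n$ a sufficiently large constant multiple of $\vc(\calH)$ then forces $\max_v \outdeg(v;\calO) < n/3$ and finishes the proof. The standard Hakimi theorem is phrased for ordinary (edge-counting) out-degree whereas $\outdeg(v;\calO)$ counts \emph{witnessing data points}; this needs a short adaptation, but I expect it to be essentially routine.

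The main obstacle is the density lemma itself. In HLW's classical proof, edges at a given ``coordinate'' correspond to independent bit-flips, and Sauer--Shelah applies cleanly because one works with a fixed underlying point set. In our adversarial setting, the underlying $x$-multisets differ across vertices, and the perturbation sets $\calU(x)$ of different data points overlap and constrain each other arbitrarily, so a naive Sauer--Shelah count on the witness set $Z$ can be sabotaged by exponentially many ``pseudo-shattered'' edge configurations that do not force any $\vc(\calH)+1$ points to be genuinely shattered by $\calH$. Overcoming this likely requires a new combinatorial ingredient: for instance, a reduction to \prettyref{conj:fulldegree-measure} followed by a pseudo-cube argument in the spirit of recent multi-class characterizations, or a transductive-learning reduction along the lines of \citep*{montasser2021transductive}. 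As a softer warm-up I would first try to replace the $\vc^*(\calH)$ factor in \prettyref{prop:dimension-upper-bounds} by a polynomial in $\vc(\calH)$, and to establish \prettyref{conj:fulldegree-measure} first.
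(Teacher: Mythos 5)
The statement you are attempting is \emph{Conjecture}~\ref{conj:vc-upper-bound} in the paper, which the authors state as open and for which they supply no proof, so there is nothing in the paper to compare your argument against. More to the point, your proposal is a research plan rather than a proof, and you say so yourself: the whole argument hinges on the density lemma
\[
\sum_{v\in V'}\adeg(v)\;\le\;2\,\vc(\calH)\cdot|V'|\quad\text{for every finite subgraph }(V',E')\subseteq G_\calH^\calU,
\]
which you flag as unproved. That lemma is not a technical subroutine one can hope to discharge with off-the-shelf tools --- it essentially \emph{is} the conjecture. The Hakimi/flow step you plan to run afterward is genuinely routine: adversarial out-degree only undercounts ordinary edge out-degree (several edges incident to $v$ may share a witnessing point $(x,y)\in v$), so an orientation with edge out-degree $O(\vc(\calH))$ a fortiori has adversarial out-degree $O(\vc(\calH))$, and the standard subgraph-density-to-orientation argument applies verbatim. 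Thus the entire mathematical content of Conjecture~\ref{conj:vc-upper-bound} is concentrated in the density bound. By your own observation, via Theorem~\ref{thm:realizable-bounds} that bound would yield $\calM^{\re}_\eps(\calH,\calU)\le\tilde O(\vc(\calH)/\eps)$, collapsing the known $\tilde O(2^{\vc(\calH)}/\eps)$ upper bound to linear --- precisely the gap the authors say they cannot close.

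Two smaller issues with the surrounding plan. First, ``reducing to Conjecture~\ref{conj:fulldegree-measure}'' does not buy you anything here: that conjecture asserts $\calM^{\re}_{\eps,\delta}(\calH,\calU)=\Theta_{\eps,\delta}(\mathfrak{FD}_\calU(\calH))$ and is silent on how $\mathfrak{FD}_\calU(\calH)$ compares to $\vc(\calH)$, so even granting it you would still need $\mathfrak{FD}_\calU(\calH)\le O(\vc(\calH))$, which is the same question in different notation. Second, your proposed route to the density lemma --- fix realizers $h_v$, collect witnesses $z_{uv}\in\calU(x)\cap\calU(\tilde x)$, and apply Sauer--Shelah on the restricted class $\calH|_Z$ --- runs into exactly the obstruction you describe: distinct vertices sit on distinct (overlapping) $x$-multisets, the witness set $Z$ depends on the edge, not the vertex, and two hypotheses in $\calH$ disagreeing at some $z\in Z$ need not be distinguished by any coordinated set of $\vc(\calH)+1$ points that $\calH$ actually shatters. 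You have accurately diagnosed why the HLW counting breaks; what you have not supplied is the replacement. As written, the proposal neither proves the conjecture nor identifies a concrete new ingredient that would, so the honest status is: good problem framing, correct identification of the hard core, no progress on it.
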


\section{Agnostic robust learnability}
\label{sec:agnostic}

For the agnostic setting, we consider robust learnability \wrt arbitrary distributions $\calD$ that are {\em not} necessarily robustly realizable, i.e., $\calD\notin {\rm RE}(\calH,\calU)$ (see \prettyref{def:ag_sample_complexity} in \prettyref{app:section-5}). We can establish an upper bound in the agnostic setting via reduction to the realizable case, following an argument from \citet*{david:16} and later applied to agnostic robust learning by \citet*{pmlr-v99-montasser19a}:
\begin{thm}
\label{thm:agnostic}
For any class $\calH$ and any perturbation set $\calU$,
\[\calM^{\ag}_{\eps,\delta}(\calH,\calU) = O\inparen{\frac{\CM}{\eps^2}\log^2\inparen{\frac{\CM}{\eps}}+\frac{1}{\eps^2}\log\inparen{\frac{1}{\delta}}}.\]
\end{thm}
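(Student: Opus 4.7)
The plan is to reduce agnostic robust learning to realizable robust learning via the compression-based argument of \citet{david:16}, specialized to the robust setting as in \citet*{pmlr-v99-montasser19a}. Starting from \prettyref{thm:realizable-bounds} and instantiating it with a constant target error $\eps_0=1/3$ and constant confidence $\delta_0=1/3$, we obtain a \emph{weak} realizable robust learner with sample complexity $n_0 = O(\CM \log^2 \CM)$: from any robustly realizable i.i.d.~sample of this size, it outputs, with probability at least $2/3$, a hypothesis whose robust risk is at most $1/3$.

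The next step is to compile this weak learner into a sample compression scheme for robust realizable learning. The idea is to run $\alpha$-Boost on a training set of size $m$, repeatedly feeding the weak learner re-weighted sub-samples of size $n_0$ and aggregating the returned hypotheses by majority vote over $O(\log m)$ rounds. Since each weak hypothesis is determined entirely by the $n_0$ examples on which it is trained, the boosted classifier can be reconstructed from a compression set of size $k=\tilde{O}(\CM)$ training examples. Furthermore, because the robust loss is a supremum over perturbations, the pointwise majority vote of robust hypotheses remains robust, so whenever the training set is robustly realizable the boosted classifier robustly labels every training point correctly. This is exactly the boosting-to-compression construction carried out by \citet*{pmlr-v99-montasser19a}; the only substantive change is that the weak learner's sample complexity is now governed by $\CM$ rather than $\vc(\calH)\cdot\vc^*(\calH)$.

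The final step is to invoke a compression-based generalization bound for the robust loss. A sample compression scheme of size $k$ learns agnostically with sample complexity $O\!\left(\frac{k \log m + \log(1/\delta)}{\eps^2}\right)$, as established for the robust $0$-$1$ loss in \citet*{pmlr-v99-montasser19a} following the classical compression-generalization bounds of Graepel, Herbrich, and Shawe-Taylor. Plugging in $k=\tilde{O}(\CM)$ and solving for $m$ yields the claimed bound $O\!\left(\frac{\CM}{\eps^2}\log^2\frac{\CM}{\eps}+\frac{\log(1/\delta)}{\eps^2}\right)$. The main obstacle is essentially bookkeeping: one must verify that each step of the reduction respects the robust (supremum-over-perturbations) loss rather than the classical $0$-$1$ loss, and that the boosting argument can tolerate the weak learner's constant failure probability via a standard repetition trick. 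Both points were already addressed in \citet*{pmlr-v99-montasser19a}; our only contribution here is to substitute our improved, $\CM$-based weak realizable learner from \prettyref{thm:realizable-bounds} into their pipeline.
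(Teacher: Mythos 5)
Your high-level strategy is the same as the paper's: reduce agnostic to realizable by running $\alpha$-Boost over reweighted subsamples, observing that the resulting majority vote is a sample compression scheme, and invoking an agnostic compression generalization bound for the robust loss. This is indeed how \prettyref{lem:reduction-to-realizable} works, and combining it with the realizable sample complexity bound gives \prettyref{thm:agnostic}.

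However, there is a quantitative gap in how you instantiate the weak learner. You take the $(\eps_0,\delta_0)=(1/3,1/3)$ realizable PAC guarantee from \prettyref{thm:realizable-bounds}, which costs $n_0 = O(\CM\log^2 \CM)$ samples. That $\log^2 \CM$ does not disappear: the compression set has size $k = n_0 T = O(\CM \log^2(\CM)\log m)$, and the final sample complexity after solving the generalization inequality is $O\!\left(\frac{\CM\log^2(\CM)}{\eps^2}\log^2\!\left(\frac{\CM}{\eps}\right) + \frac{\log(1/\delta)}{\eps^2}\right)$, which is larger than the claimed bound by a $\log^2\CM$ factor. The paper avoids this by using the weaker \emph{in-expectation} guarantee rather than the PAC guarantee: by \prettyref{lem:expectation-bounds}, orienting $G_\calH^\calU = (V_{\CM+1},E_{\CM+1})$ produces a learner $\bbA$ with $\calE_{\CM}(\bbA;\calH,\calU)\le 1/3$, with \emph{no} log factors. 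Then for any reweighting $D$ of the realizable subsample $S'$, the expected robust risk of $\bbA$ applied to a random $\CM$-tuple from $D$ is at most $1/3$, so \emph{at least one} such tuple yields a weak hypothesis with $\Risk_\calU \le 1/3$ under $D$. This makes the compression size $k = \CM\cdot T = O(\CM\log m)$ and recovers the stated bound. It also makes your proposed ``repetition trick'' for the weak learner's failure probability unnecessary: the argument needs only the existence of one good compression tuple per round, which follows deterministically from the expectation bound, so there is no failure event to amplify away.
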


This is achieved by applying the agnostic-to-realizable reduction to the optimal learner $\bbG_{\calH,\calU}$ that we get from orienting the graph $G_{\calH}^{\calU}=(V_{\CM+1},E_{\CM+1})$.
The reduction is stated abstractly in the following Lemma whose proof is provided in \prettyref{app:section-5}.
\begin{lem}
\label{lem:reduction-to-realizable}
For any well-defined realizable learner $\bbA$, there is an agnostic learner $\bbB$ such that
\[ \calM^{\re}_{\eps}(\bbA;\calH,\calU) \leq \calM^{\ag}_{\eps,\delta}(\bbB;\calH,\calU) \leq O\inparen{\frac{\calM^{\re}_{1/3}(\bbA;\calH,\calU)}{\eps^2}\log^2\inparen{\frac{\calM^{\re}_{1/3}(\bbA;\calH,\calU)}{\eps}} + \frac{1}{\eps^2}\log\inparen{\frac{1}{\delta}}}.\]
\end{lem}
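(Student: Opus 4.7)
The lower bound is immediate: every robustly realizable distribution is in particular an agnostic one, so any agnostic learner inherits the realizable guarantee at the same sample size. For the upper bound my plan is to run the sample-compression-based agnostic-to-realizable reduction of \citet{david:16}, as adapted to robust learning by \citet*{pmlr-v99-montasser19a}. Concretely, I will first convert the realizable learner $\bbA$ into a realizable \emph{sample compression scheme} for $(\calH,\calU)$ of size $\Tilde{O}(m_0)$ where $m_0 := \calM^{\re}_{1/3}(\bbA;\calH,\calU)$, and then use this compression scheme to build an agnostic learner $\bbB$ with the stated sample complexity.

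Stage 1 (realizable learner $\to$ compression scheme). By definition of $m_0$, the learner $\bbA$ attains expected robust risk at most $1/3$ from $m_0$ samples on any robustly realizable distribution. Given a robustly realizable sample $S$ of size $m$, I would regard $\bbA$ as a weak robust learner with edge $\gamma = 1/6$ on every reweighting of $S$; crucially, any subsample of a robustly realizable sample is itself robustly realizable, so the weak-learner guarantee applies in each boosting round. Running $\alpha$-Boost (or Freund--Schapire boosting) for $T = O(\log m)$ rounds, each round drawing a fresh size-$m_0$ subsample from the current reweighting and feeding it to $\bbA$, produces a majority-vote predictor whose \emph{robust} empirical $0/1$ loss on $S$ is zero. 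Encoding this predictor by the union of the $T$ subsamples used gives a compression scheme of size $k = O(m_0 \log m) = \Tilde{O}(m_0)$.

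Stage 2 (compression scheme $\to$ agnostic learner). I would define $\bbB$ on an agnostic sample $T$ of size $n$ as follows: enumerate the at most $n^{k+1}$ subsets of $T$ of size $\leq k$, apply the Stage~1 reconstruction function to each subset to obtain a candidate predictor, and return the candidate with smallest empirical robust risk on $T$. By Hoeffding's inequality combined with a union bound over subset \emph{indices}, with probability at least $1-\delta$ the empirical robust risk is $\eps/2$-close to the population robust risk uniformly over this (sample-dependent) candidate family, provided $n \gtrsim (k\log n + \log(1/\delta))/\eps^{2}$. Moreover, the subset of $T$ on which the optimal $h^{*}\in\calH$ is robust has size $\geq n(1-\OPT) - O(\sqrt{n\log(1/\delta)})$ and is robustly realizable, so Stage~1 applied to it yields a predictor inside our family with empirical robust risk at most $\OPT + \eps/2$. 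Combining the two facts gives $\Risk_\calU(\bbB(T);\calD) \leq \OPT + \eps$, and solving the implicit inequality in $n$ with $k = \Tilde{O}(m_0)$ produces the stated $\log^{2}(m_0/\eps)$ factor.

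The main obstacle I expect is Stage~1: I must make sure that Freund--Schapire-style boosting drives the \emph{robust} empirical loss to zero rather than only the standard $0/1$ loss. The key point is that if each weak hypothesis's error is measured under the robust loss $\sup_{z\in\calU(x_i)}\ind[h(z)\neq y_i]$ on the current reweighting, then the usual margin / potential-function analysis of boosting goes through verbatim and yields robust correctness of the final majority vote at every training point. A secondary subtlety in Stage~2 is that the family of candidate hypotheses is indexed by subsets of $T$ rather than being a fixed class, so uniform convergence must be invoked over subset indices; this is precisely the refinement in \citet{david:16} responsible for the $\log^{2}$ rather than $\log$ factor in the final bound.
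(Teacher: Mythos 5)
Your proof follows essentially the same route as the paper: treat $\bbA$ as a weak robust learner via the expectation guarantee at $m_0 = \calM^{\re}_{1/3}(\bbA;\calH,\calU)$ samples, run $\alpha$-Boost under the \emph{robust} loss to obtain a compression scheme of size $\Tilde{O}(m_0)$, and finish in the agnostic case with a compression generalization bound combined with Hoeffding on $h^*$ (the paper's Stage~2 is marginally more direct in that it boosts on the maximum robustly realizable subsequence of $S$ and bounds that single output rather than enumerating all compression sets and selecting the empirical minimizer, but the two constructions are interchangeable and yield the same bound). One small imprecision to fix: since $\bbA$'s guarantee is only in expectation, in each boosting round you should \emph{select} a subsample $S_{D_t} \in (S')^{m_0}$ with $\Risk_\calU(\bbA(S_{D_t});D_t)\le 1/3$, whose existence follows from $\Ex_{S'\sim D_t^{m_0}}\Risk_\calU(\bbA(S');D_t)\le 1/3$, rather than ``draw a fresh size-$m_0$ subsample'' at random and assume it gives a weak hypothesis with positive edge.
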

\prettyref{thm:agnostic} immediately follows by combining \prettyref{lem:reduction-to-realizable} and \prettyref{thm:realizable-bounds}.

\section*{Acknowledgments}

We thank Shay Moran for numerous enlightening discussions about 
structures arising in the analysis of muliclass learning, concurrent 
to this work, which seem tantalizingly related to those arising here. This work is supported in part by DARPA under cooperative agreement HR00112020003 \footnote{The views expressed in this work do not necessarily reflect the position or the policy of the Government and no official endorsement should be inferred. Approved for public release; distribution is unlimited.}, and in part by the NSF/Simons sponsored Collaboration on the Theoretical Foundations of Deep Learning (\url{https://deepfoundations.ai}).

\bibliography{learning}

\newpage
\appendix

\section{Proof of \prettyref{lem:orientation-to-prediction}}
\label{app:orientation-to-learner}

\begin{proof}
Let $\calO:E_n\to V_n$ be an arbitrary orientation of $G^{\calU}_\calH$. We will show that orientation $\calO$ implies a learner $\bbA_\calO:(\calX\times\calY)^{n-1}\to \calY^\calX$ with an expected robust risk $\calE_{n-1}$ that is upper bounded by the maximum adversarial out-degree of orientation $\calO$. 

We begin with describing the learner $\bbA_\calO$. For each input $((x_1,y_1),\dots,(x_{n-1},y_{n-1}), z) \in (\calX\times \calY)^{n-1}\times \calX$ define $\bbA_\calO((x_1,y_1),\dots,(x_{n-1},y_{n-1}))(z)$ as follows. Consider the set of vertices $v\in V$ that have the multiset $\SET{(x_1,y_1),\dots,(x_{n-1},y_{n-1})}$ and perturbation $z$ with a positive label
$$P_+= \SET{v \in V: \exists x \in \calX \text{ s.t. } z\in \calU(x) \wedge v=\SET{(x_1,y_1),\dots,(x_{n-1},y_{n-1}),(x,+1)}},$$
and the set of vertices $v\in V$ that have the multiset $\SET{(x_1,y_1),\dots,(x_{n-1},y_{n-1})}$ and perturbation $z$ with a negative label 
$$P_-=\SET{v \in V: \exists x \in \calX \text{ s.t. } z\in \calU(x) \wedge v=\SET{(x_1,y_1),\dots,(x_{n-1},y_{n-1}),(x,-1)}}.$$ 

We define $\bbA_\calO((x_1,y_1),\dots,(x_{n-1},y_{n-1}))(z)$ as a function of $P_+$, $P_-$, and the orientation $\calO$:
\[\bbA_\calO((x_1,y_1),\dots,(x_{n-1},y_{n-1}))(z)= \begin{cases}
y&~\text{if}~\inparen{\exists_{y\in \SET{\pm1}}}\inparen{\exists_{v\in P_{y}}}\inparen{\forall_{u\in P_{-y}}}: \calO((\SET{v,u}, z))=v.\\
+1&~\text{if}~P_+\neq \emptyset\wedge P_-= \emptyset.\\
-1&~\text{if}~P_+=\emptyset\wedge P_-\neq \emptyset.\\
+1&~\text{otherwise}.\\
\end{cases}\]
Note that $\bbA_\calO$ is well-defined. Specifically, observe that when $P_+\neq \emptyset$ and $P_-\neq \emptyset$, by definition of $P_+$ and $P_-$ and \prettyref{eqn:oneinclusion-edges-v2}, vertices from $P_+$ and $P_-$ form a complete bipartite graph. That is, for each $v\in P_+$ and each $u\in P_-$, $(\SET{u,v},z)\in E$. This implies that there exists at most one label: either $y=+1$ or $y=-1$ such that there is a vertex $v\in P_y$ where all edges $(\SET{v,u},z)\in E$ for $u\in P_{-y}$ are incident on $v$ according to orientation $\calO$: $\inparen{\exists! y\in \SET{\pm1}}\inparen{\exists v\in P_{y}}\inparen{\forall u\in P_{-y}}: \calO((\SET{v,u}, z))=v$.

We now proceed with bounding from above the expected robust risk $\calE_{n-1}$ of learner $\bbA_\calO$ by the maximum adversarial out-degree of orientation $\calO$. Consider an arbitrary multiset $S=\SET{(x_1,y_1),\dots,(x_n,y_n)}\in (\calX\times\calY)^n$ that is \emph{robustly} realizable \wrt~$(\calH,\calU)$. By definition $S\in V_n$, i.e., $S$ is a vertex in $G_\calH^{\calU}$. By definition of adversarial out-degree (see \prettyref{eqn:outdeg-v2}), there exists $T\subseteq S$ where $\abs{T} =  n - \outdeg(S;\calO)$ such that for each $(x,y)\in T$ and for each $z\in\calU(x)$: vertex $S$ will satisfy the condition that if there is any other vertex $u \in V_n$ where $(\SET{S,u}, z)$ is an edge: $(\SET{S,u}, z)\in E_n$, the orientation of edge $(\SET{S,u}, z)$ is towards $S$: $\calO((\SET{S,u},z))=S$. Thus by definition of $\bbA_{\calO}$ above, $\bbA_\calO\inparen{S\setminus\SET{(x,y)}, z}=y$. This implies that
\[\frac{1}{n} \sum_{i=1}^{n} \ind\insquare{\exists z\in\calU(x_{i}): \bbA_{\calO}\inparen{S\setminus\{(x_i,y_i)\}}(z) \neq y_i} = \frac{\outdeg(S; \calO)}{n}.\]
To conclude, by definition of $\calE_{n-1}(\bbA_{\calO};\calH,\calU)$ (see \prettyref{eqn:exp-rob-risk-A}),
\begin{align*}
    \calE_{n-1}(\bbA_{\calO};\calH,\calU) &= \sup_{\calD\in \RE(\calH,\calU)} \Ex_{S\sim \calD^{n-1}} \Risk_{\calU}(\bbA_{\calO}(S);\calD)\\
    &= \sup_{\calD\in \RE(\calH,\calU)} \Ex_{S\sim \calD^{n-1}} \Ex_{(x,y)\sim \calD} \ind\SET{\exists z\in \calU(x): \bbA_{\calO}(S)(z)\neq y}\\
    &= \sup_{\calD\in \RE(\calH,\calU)} \Ex_{S\sim \calD^{n}} \frac{1}{n}\sum_{i=1}^{n} \ind\SET{\exists z\in \calU(x_i): \bbA_{\calO}(S\setminus\{(x_i,y_i)\})(z)\neq y_i}\\
    &\leq \frac{\max_{v\in V_n}\outdeg(v;\calO)}{n}.
\end{align*}
\end{proof}

\section{Lemmas and Proofs for \prettyref{thm:realizable-bounds}}
\label{app:realizable-bounds}

\begin{lem} [Rado's Selection Principle \citep{rado1949axiomatic,wolk1965note}]
\label{lem:rado}
Let $I$ be an arbitrary index set, and let $\SET{X_i:i\in I}$ be a family of non-empty finite sets. For each finite subset $A$ of $I$, let $f_A$ be a choice function whose domain is $A$ and such that $f_A(i)\in X_i$ for each $i\in A$. Then, there exists a choice function $f$ whose domain is $I$ with the following property: to every finite subset $A$ of $I$ there corresponds a finite set $B$, $A\subseteq B\subseteq I$, with $f(i)=f_B(i)$ for each $i\in A$.
\end{lem}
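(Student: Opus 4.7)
The plan is to deduce Rado's Selection Principle from Tychonoff's theorem applied to the product of the (finite, hence compact, discrete) coordinate spaces. Specifically, equip $X := \prod_{i \in I} X_i$ with the product topology, where each $X_i$ carries the discrete topology. Since each $X_i$ is finite, it is compact, so $X$ is compact by Tychonoff. The strategy is then to find, inside this compact space, a global choice function $f \in X$ that is ``locally consistent'' with the given family $\{f_A\}$.

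For each finite $A \subseteq I$, extend $f_A$ arbitrarily to a point $\tilde{f}_A \in X$ (for $i \notin A$, pick any element of $X_i$), and set
\[
S_A = \{\tilde{f}_B : B \text{ finite},\, A \subseteq B \subseteq I\}, \qquad C_A = \overline{S_A}.
\]
The family $\{C_A\}$, indexed by finite $A \subseteq I$, has the finite intersection property: given $A_1,\dots,A_n$ finite, their union $A := A_1\cup\cdots\cup A_n$ is finite, and every $\tilde{f}_B$ with $B \supseteq A$ lies in $S_{A_k}$ for each $k$, so $\tilde{f}_A \in \bigcap_k C_{A_k}$. By compactness of $X$, choose any $f \in \bigcap_A C_A$. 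For a given finite $A$, the cylinder $U_A = \{g \in X : g|_A = f|_A\}$ is an open neighborhood of $f$, and because $f \in C_A$ it must meet $S_A$; any witness $\tilde{f}_B \in U_A \cap S_A$ gives a finite $B \supseteq A$ with $f_B(i) = \tilde{f}_B(i) = f(i)$ for every $i \in A$, which is exactly the required property.

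The only genuine subtlety is the appeal to the axiom of choice, both in selecting the arbitrary extensions $\tilde{f}_A$ and in Tychonoff's theorem itself (though for compact Hausdorff products the weaker Boolean prime ideal theorem suffices, which matches the finite-set setting here). If a more elementary route is preferred, an equivalent argument proceeds via an ultrafilter $\calU$ on the directed set of finite subsets of $I$ that refines the ``tail'' filter $\{\{B : B \supseteq A\} : A \text{ finite}\}$: for each $i \in I$, finiteness of $X_i$ forces a unique value $f(i) \in X_i$ with $\{B \ni i : f_B(i) = f(i)\} \in \calU$, and for any finite $A$ the intersection $\{B \supseteq A\} \cap \bigcap_{i \in A}\{B : f_B(i) = f(i)\}$ lies in $\calU$, hence is non-empty, furnishing the desired $B$. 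Either route is short; the main conceptual step is the reformulation of ``local compatibility'' as a finite intersection property in a compact product space.
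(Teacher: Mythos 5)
The paper does not include a proof of this statement: it is quoted as a classical result of Rado, with references to the original paper and to Wolk, and then invoked as a black box in the proof of \prettyref{lem:expectation-bounds} to extend orientations of finite subgraphs of the global one-inclusion graph to an orientation of the full, possibly infinite, graph. Your compactness proof is correct and is the standard modern argument. Two points you handle with exactly the right care: (i) the closed sets $C_A = \overline{S_A}$ need not be nested or closed under intersection, so the finite intersection property is the right device, and the common witness $\tilde{f}_{A_1 \cup \cdots \cup A_n} \in \bigcap_k S_{A_k}$ establishes it cleanly; (ii) the extraction step at the end works because any $\tilde{f}_B \in U_A \cap S_A$ has $A \subseteq B$, so $\tilde{f}_B$ agrees with the original choice function $f_B$ (not merely with its arbitrary extension) on all of $A$, which is what the conclusion requires. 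Your remark on axiomatic strength is also accurate: since each $X_i$ is finite, the product is a compact Hausdorff space, so Tychonoff is invoked only at the strength of the Boolean prime ideal theorem, and Rado's Selection Principle is itself known to be provable from (and in fact equivalent to) BPI over ZF; your ultrafilter sketch reaches the same conclusion directly at that strength. Since the paper supplies no proof of its own, there is nothing in-paper to compare against; either of your two routes would serve as a correct self-contained replacement.
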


\begin{lem} [Refined Lowerbound on Error Rate of Learners]
\label{lem:lowerbound-eps}
For any integer $n\geq 1$, let $G_\calH^\calU=(V_{2n},E_{2n})$ be the global one-inclusion graph as defined in \prettyref{eqn:oneinclusion-vertices-v2} and \prettyref{eqn:oneinclusion-edges-v2}. Then, for any learner $\bbA:(\calX\times \calY)^*\to \calY^\calX$ and any $\eps\in (0,1)$, there exists an orientation $\calO_\bbA:E_{2n}\to V_{2n}$ of $G^\calU_{\calH}$ such that $$\calE_{\frac{n}{\epsilon}}(\bbA;\calH,\calU)\geq \frac{\eps}{2} \cdot \frac{\max_{v\in V_{2n}} \outdeg(v;\calO_\bbA) - 1}{n-1}.$$
\end{lem}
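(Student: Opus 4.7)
The plan is to adapt the proof of Lemma~\ref{lem:lowerbound-inductive} by replacing the uniform distribution on each vertex with a \emph{weighted} distribution that concentrates most of its mass on a single ``heavy'' element, with the remaining mass spread uniformly over $2n-1$ ``light'' elements. Calibrating the weighting to $\eps$ ensures that a sample of size $n/\eps$ behaves like a sample of constant size on the light part, so that the uniform-on-$2n$ / sample-of-size-$n$ analysis from Lemma~\ref{lem:lowerbound-inductive} goes through with a prefactor of $\eps$.

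First, for each vertex $v = \{(x_1,y_1),\dots,(x_{2n},y_{2n})\} \in V_{2n}$, designate a heavy element. The choice has to be \emph{consistent} across adjacent vertices: for every edge $(\{v,u\},z)\in E_{2n}$, I want the heavy elements of $v$ and $u$ to coincide and to lie in $v\cap u$. Fixing a well-ordering $\preceq$ of $\calX\times\calY$ and taking the heavy element of $v$ to be its $\preceq$-minimum makes the assignment locally consistent except at one exceptional position per vertex (the position where $(x_t,y_t)$ itself is the $\preceq$-minimum of $v$), which is the source of the ``$-1$'' in the claimed bound. To handle the remaining local inconsistencies and to promote a globally coherent choice on the (possibly infinite) graph $G_\calH^\calU$, I would patch finite consistent choices via Rado's selection principle (Lemma~\ref{lem:rado}).

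Second, for a parameter $\alpha = \Theta(\eps)$ to be optimized, define $P_v$ to place mass $1-\alpha$ on the heavy element and $\alpha/(2n-1)$ on each light element, and for each light $(x_t,y_t)\in v$ define
\[
p_t(v) \;=\; \Pr_{S\sim P_v^{n/\eps}}\!\left[\,\exists z\in\calU(x_t):\bbA(S)(z)\neq y_t \;\middle|\; (x_t,y_t)\notin S\,\right].
\]
Construct $\calO_\bbA$ by orienting every edge incident to a light $(x_t,y_t)\in v$ inward toward $v$ whenever $p_t(v)<1/2$, and orient edges at the heavy position arbitrarily. The consistency check of Lemma~\ref{lem:lowerbound-inductive} carries over verbatim: for an edge between $v$ and $u$ with shared heavy element and $v\Delta u=\{(x_t,y_t),(\tilde{x}_t,-y_t)\}$, conditioning on the non-shared element being absent from $S$ reduces both $P_v$ and $P_u$ to the \emph{same} weighted distribution on the $2n-1$ common points, giving $p_t(v)+p_t(u)\geq 1$ and precluding double-inward orientation.

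Third, I would bound the adversarial out-degree as
\[
\outdeg(v;\calO_\bbA)-1 \;\leq\; \#\{t\text{ light}:p_t(v)\geq 1/2\} \;\leq\; 2\sum_{t\text{ light}}p_t(v),
\]
expand each $p_t(v)$ as a ratio and divide by $\Pr[(x_t,y_t)\notin S]=(1-\alpha/(2n-1))^{n/\eps}\geq 1/2$ (valid once $\alpha=\Theta(\eps)$ is small enough), and use the weighting identity
\[
\Risk_\calU(\bbA(S);P_v) \;\geq\; \frac{\alpha}{2n-1}\sum_{t\text{ light}}\ind\!\left[\exists z\in\calU(x_t): \bbA(S)(z)\neq y_t\right]
\]
to chain everything into $\sum_{t\text{ light}}p_t(v) \leq \frac{2(2n-1)}{\alpha}\calE_{n/\eps}(\bbA;\calH,\calU)$. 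Assembling and choosing the constant in $\alpha = \Theta(\eps)$ to clear the arithmetic yields $\calE_{n/\eps}(\bbA;\calH,\calU) \geq \frac{\eps}{2}\cdot\frac{\outdeg(v;\calO_\bbA)-1}{n-1}$.

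The main obstacle is the global consistency of the heavy-element assignment: a purely local choice can conflict at edges, so Rado's selection principle is the essential tool that lets me extend a coherent choice from finite subgraphs of $G^\calU_\calH$ to the full infinite graph. The secondary delicacy is the quantitative balancing of $\alpha$ so that simultaneously (i) the missing-point event $\{(x_t,y_t)\notin S\}$ remains sufficiently probable under $n/\eps$ samples, and (ii) the per-light-point weight $\alpha/(2n-1)$ is large enough that the expected robust risk is a good proxy for the count of errors on light elements.
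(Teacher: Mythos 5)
Your proposal follows essentially the same strategy as the paper's proof: replace the uniform distribution from Lemma~\ref{lem:lowerbound-inductive} with a skewed distribution placing mass $1-\Theta(\eps)$ on one ``heavy'' element and the remaining $\Theta(\eps)$ uniformly over the $2n-1$ light elements, orient edges at light positions inward when the conditional error probability $p_t(v)$ is below $1/2$, and chain $\sum_t p_t(v)$ through to the worst-case expected robust risk. The paper does the same thing with $\alpha=\eps$ and introduces the heavy element via ``Without loss of generality''. Where you go beyond the paper is in flagging a genuine subtlety: the consistency argument that rules out inward--inward conflicts requires $P_v$ and $P_u$ conditioned on the non-shared element being absent to coincide, which only holds if adjacent vertices pick a \emph{common} heavy element from $v\cap u$ (or else each picks its own non-shared element as heavy). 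A per-vertex arbitrary choice does not automatically guarantee this, and your well-ordering fix --- take the $\preceq$-minimum of $v$ as heavy, and refrain from claiming inward orientation at the heavy position so that the ``$-1$'' absorbs it --- is correct: in the only case where both endpoints could claim an edge inward, both have the edge at a light position, which forces $\min(v)=\min(u)=\min(v\cap u)$, and the matching-distribution argument then applies. One correction, though: once you have a fixed well-ordering, the heavy-element assignment is already globally defined and consistent, so Rado's selection principle is not actually needed here --- it plays its role in Lemma~\ref{lem:expectation-bounds} when patching together orientations of finite subgraphs of the (possibly infinite) $G^\calU_\calH$, not in choosing $P_v$. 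Your claim that $(1-\alpha/(2n-1))^{n/\eps}\geq 1/2$ also does not hold at $\alpha=\eps$ (the correct lower bound is $(n-1)/(2n-1)$, which is what the paper uses), but this only shifts constants and is absorbed into your ``choose $\alpha$ to clear the arithmetic'' step.
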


\begin{proof} [of \prettyref{lem:lowerbound-eps}]
Set $m=\frac{n}{\eps}$. We begin with describing the orientation $\calO_\bbA$ by orienting edges incident on each vertex $v\in V_{2n}$. Consider an arbitrary vertex $v=\SET{(x_1,y_1),\dots, (x_{2n},y_{2n})}$. Without loss of generality, let $P_v$ be a distribution over $\SET{(x_1,y_1),\dots, (x_{2n},y_{2n})}$, defined as
\[P_v(\SET{(x_1,y_1)}) = 1-\eps \text{ and } P_v(\SET{(x_t,y_t)})=\frac{\eps}{2n-1}~~\forall 2\leq t \leq 2n.\]

For each $1\leq t \leq 2n$, let $$p_t(v)=\Prob_{S\sim P_v^m} \insquare{\exists z\in \calU(x_t): \bbA(S)(z)\neq y_t | (x_t,y_t)\notin S}.$$
For each $1\leq t \leq 2n$ such that $(x_t,y_t)\in v$ witnesses an edge, i.e. $\exists u \in V_{2n}, z\in \calX\text{ s.t. }(\SET{v, u}, z)\in E_{2n}$ and $(x_t,y_t)\in v\Delta u$, if $p_t < \frac{1}{2}$, then orient \emph{all} edges incident on $(x_t,y_t)$ inward, otherwise orient them arbitrarily. Note that this might yield edges that are oriented outwards from both their endpoint vertices, in which case, we arbitrarily orient such an edge. Observe also that we will not encounter a situation where edges are oriented inwards towards both their endpoints (which is an invalid orientation). This is because for any two vertices $v,u\in V_{2n}$ such that $\exists z_0\in \calX$ where $(\SET{u,v},z_0)\in E_{2n}$ and $v \Delta u = \SET{(x_t,y_t),(\Tilde{x}_t,-y_t)}$, we can not have $p_{t}(v) < \frac{1}{2}$ and $p_t(u) < \frac{1}{2}$,
since
\[p_t(v) \geq \Prob_{S\sim P_v^m}\insquare{\bbA(S)(z_0)\neq y_t | (x_t,y_t)\notin S}~~\text{and}~~p_t(u) \geq \Prob_{S\sim P_u^m}\insquare{\bbA(S)(z_0)\neq -y_t | (\Tilde{x}_t,-y_t)\notin S},\]
and $P_v$ conditioned on $(x_t,y_t)\notin S$ is the same distribution as $P_u$ conditioned on $(\Tilde{x}_t,-y_t)\notin S$. This concludes describing the orientation $\calO_\bbA$. We now bound from above the adversarial out-degree of vertices $v\in V_{2n}$ \wrt the orientation $\calO_\bbA$:
\begin{align*}
    \outdeg(v;\calO_\bbA)&\leq \sum_{t=1}^{2n}\ind\insquare{p_t\geq \frac{1}{2}} \leq 1 + \sum_{t=2}^{2n}\ind\insquare{p_t\geq \frac{1}{2}} \leq 1 + 2\sum_{t=2}^{2n} p_t\\
    &=1+ 2\sum_{t=2}^{2n} \Prob_{S\sim P^m} \insquare{\exists z\in \calU(x_t): \bbA(S)(z)\neq y_t | (x_t,y_t)\notin S}\\
    &=1+ 2 \sum_{t=2}^{2n} \frac{\Prob_{S\sim P^m} \insquare{\inparen{\exists z\in \calU(x_t): \bbA(S)(z)\neq y_t} \wedge (x_t,y_t)\notin S}}{\Prob_{S\sim P^m}\insquare{(x_t,y_t)\notin S}}\\
    &\overset{(i)}{\leq}1+ 2\cdot \inparen{1-\frac{n}{2n-1}} \sum_{t=2}^{2n} \Prob_{S\sim P^m} \insquare{\inparen{\exists z\in \calU(x_t): \bbA(S)(z)\neq y_t} \wedge (x_t,y_t)\notin S}\\
    &= 1+ 2\cdot \inparen{1-\frac{n}{2n-1}} \sum_{t=2}^{2n} \Ex_{S\sim P^m} \insquare{\ind\insquare{\exists z\in \calU(x_t): \bbA(S)(z)\neq y_t}\ind\insquare{(x_t,y_t)\notin S}}\\
    &= 1+ 2\cdot \inparen{1-\frac{n}{2n-1}} \Ex_{S\sim P^m} \insquare{\sum_{t=2}^{2n} \ind\insquare{\exists z\in \calU(x_t): \bbA(S)(z)\neq y_t}\ind\insquare{(x_t,y_t)\notin S}}\\
    &\leq 1+ 2\cdot \inparen{1-\frac{n}{2n-1}} \Ex_{S\sim P^m} \insquare{\sum_{t=2}^{2n} \ind\insquare{\exists z\in \calU(x_t): \bbA(S)(z)\neq y_t} }\\
    &= 1+ 2\cdot \inparen{1-\frac{n}{2n-1}} \cdot \frac{2n-1}{\eps} \Ex_{S\sim P^m} \insquare{\frac{\eps}{2n-1} \sum_{t=2}^{2n} \ind\insquare{\exists z\in \calU(x_t): \bbA(S)(z)\neq y_t} }\\
    &\leq 1+ 2\cdot \inparen{1-\frac{n}{2n-1}} \cdot \frac{2n-1}{\eps} \Ex_{S\sim P^m} \Risk_\calU(\bbA(S);P)\\
    &\leq 1+ 2\cdot \inparen{1-\frac{n}{2n-1}} \cdot \frac{2n-1}{\eps} \calE_m(\bbA;\calH,\calU) = 1 + \frac{2(n-1)}{\eps} \calE_m(\bbA;\calH,\calU),
\end{align*}
where inequality $(i)$ follows from the following:
\[\Prob_{S\sim P^m}\insquare{(x_t,y_t)\notin S} = \inparen{1-\frac{\eps}{2n-1}}^m\geq 1 - m\cdot \frac{\eps}{2n-1} = 1-\frac{n}{\eps}\frac{\eps}{2n-1} \geq 1-\frac{n}{2n-1}.\]
Since the above holds for any vertex $v\in V_{2n}$, by rearranging terms, we get $\calE_m(\bbA;\calH,\calU)\geq \frac{\eps}{2}\frac{\max_{v\in V_{2n}} \outdeg(v;\calO_\bbA) - 1}{n -1}$.
\end{proof}

\begin{proof}[of \prettyref{lem:expectation-bounds}]
We will first start with the upper bound. Let $n>\CM$ and let $G_\calH^\calU=(V_n,E_n)$ be the (possibly infinite) one-inclusion graph. Then, by definition of $\CM$, for every finite subgraph $G=(V,E)$ of $G_\calH^\calU$ there exists an orientation $\calO_E:E\to V$ such that every vertex in the subgraph has adversarial out-degree at most $\frac{n}{3}$: $\forall v\in V, \outdeg(v;\calO_E)\leq \frac{n}{3}$.

We next invoke \prettyref{lem:rado} where $E_n$ represents our family of non-empty finite sets, and for each finite subset $E\subseteq E_n$, we let the orientation $\calO_E$ (from above) represent the choice function. Then, \prettyref{lem:rado} implies that there exists an orientation $\calO:E_n\to V_n$ of $G_{\calH}^{\calU}$ (i.e., an orientation of the entire global one-inclusion graph) with the following property: for each finite subset $A$ of $E_n$, there corresponds a finite set $E$ satisfying $A\subseteq E \subseteq E_n$ and $\calO(e) =\calO_E(e)$ for each $e\in A$. This implies that orientation $\calO$ satisfies the property that $\forall v\in V_n, \outdeg(v;\calO)\leq \frac{n}{3}$. Because, if not, then we can find a subgraph $G=(E,V)$ where $\calO_E$ (from above) violates the adversarial out-degree upper bound of $\frac{n}{3}$ and that leads to a contradiction.

Now, we use orientation $\calO$ of $G_\calH^\calU$ (which has adversarial out-degree at most $\frac{n}{3}$) to construct a learner $\bbA_{\calO}:(\calX\times \calY)^{n-1}\times \calX \to \calY$ as in \prettyref{lem:orientation-to-prediction}. Then, \prettyref{lem:orientation-to-prediction} implies that
\[\calE_{n-1}(\calH,\calU)\leq \calE_{n-1}(\bbA_\calO;\calH,\calU)\leq \frac{1}{3}.\]

We now turn to the lower bound. Let $2\leq n\leq \frac{\CM}{2}$, $\eps\in(0,1)$, and let $G_\calH^\calU=(V_{2n},E_{2n})$ be the (possibly infinite) one-inclusion graph. Since $2n\leq \CM$, by definition of $\CM$, it follows that there exists a finite subgraph $G=(V,E)$ of $G_\calH^\calU=(V_{2n},E_{2n})$ such that 
\begin{equation}
    \label{eqn:subgraph-outdeg}
    \forall \text{ orientations } \calO:E\to V \text{ of subgraph }G, \max_{v\in V} \outdeg(v;\calO)\geq \frac{2n}{3}.
\end{equation}

Now, let $\bbA:(\calX\times \calY)^* \to \calY^\calX$ be an arbitrary learner. We invoke \prettyref{lem:lowerbound-eps}, which is a refined statement of \prettyref{lem:lowerbound-inductive} that takes $\eps$ into account, to orient the subgraph $G$ using learner $\bbA$. \prettyref{lem:lowerbound-eps} and \prettyref{eqn:subgraph-outdeg} above imply that
\begin{align*}
    \calE_\frac{n}{\eps}(\calH,\calU)&\geq \calE_{\frac{n}{\eps}}(\bbA;\calH,\calU)\geq \frac{\eps}{2}\frac{\max_{v\in V} \outdeg(v;\calO_\bbA)-2}{n-1} \geq \frac{\eps}{2} \frac{(2n)/3-1}{n-1}\\
    &=\frac{\eps}{3}\frac{2n-2-1}{2n-2}=\frac{\eps}{3}\inparen{1-\frac{1}{2n-2}}\geq \frac{\eps}{6}.
\end{align*}
\end{proof}

\begin{lem} [Sample Compression Robust Generalization -- \cite{pmlr-v99-montasser19a}]
\label{lem:robust-compression}
For any $k \in \bbN$ and fixed function $\phi : (\calX \times \calY)^{k} \to \calY^{\calX}$, for any distribution $P$ over $\calX \times \calY$ and any $m \in \bbN$, 
for $S = \{(x_{1},y_{1}),\ldots,(x_{m},y_{m})\}$ iid $P$-distributed random variables,
with probability at least $1-\delta$, 
if $\exists i_{1},\ldots,i_{k} \in \{1,\ldots,m\}$ 
s.t.\ $\hat{R}_{\calU}(\phi((x_{i_{1}},y_{i_{1}}),\ldots,(x_{i_{k}},y_{i_{k}}));S) = 0$, 
then 
\begin{equation*}
\Risk_{\calU}(\phi((x_{i_{1}},y_{i_{1}}),\ldots,(x_{i_{k}},y_{i_{k}}));P) \leq \frac{1}{m-k} (k\ln(m) + \ln(1/\delta)).
\end{equation*}
\end{lem}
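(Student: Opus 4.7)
The plan is to apply the classical sample compression generalization argument, adapted to the robust risk. The key observation that makes this adaptation transparent is that the robust $0/1$ loss $(x,y)\mapsto \sup_{z\in\calU(x)}\ind\{h(z)\neq y\}$ is still a $\{0,1\}$-valued function of the example, so the standard Chernoff-type tail bound applies verbatim to the empirical and true robust risks; there is no need to reason about adversarial perturbations separately once the compression set is fixed.

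First, I would fix an arbitrary deterministic tuple of indices $(i_1,\ldots,i_k)\in\{1,\ldots,m\}^k$ and let $h_I=\phi((x_{i_1},y_{i_1}),\ldots,(x_{i_k},y_{i_k}))$, where $I=\{i_1,\ldots,i_k\}$. Since $S$ is i.i.d.\ and the indices are fixed, the examples at positions in $I$ are independent of the examples at positions in $J=\{1,\ldots,m\}\setminus I$, and $|J|\geq m-k$. Conditional on the compression subsample, $h_I$ is deterministic and the examples indexed by $J$ remain i.i.d.\ $P$-distributed; hence, if $\Risk_\calU(h_I;P)>\epsilon$, the probability that $h_I$ is robustly correct on every example in $J$ is at most $(1-\epsilon)^{m-k}\leq e^{-\epsilon(m-k)}$.

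Next, I would take a union bound over all $m^k$ deterministic index tuples (allowing repetitions). This covers every possible adaptive choice a compression scheme might make based on $S$, since whichever indices actually witness the existential statement in the lemma must coincide with one of these fixed tuples. The resulting bound is
\[\Pr\!\left[\exists\,(i_1,\ldots,i_k):\,\hat{\Risk}_\calU(h_I;S)=0 \text{ and }\Risk_\calU(h_I;P)>\epsilon\right]\leq m^k e^{-\epsilon(m-k)},\]
using the implication that zero empirical robust risk on the full sample $S$ forces $h_I$ to be robustly correct on every point of $J$ in particular. Setting the right-hand side equal to $\delta$ and solving for $\epsilon$ yields exactly the advertised threshold $\epsilon=\frac{1}{m-k}(k\ln m+\ln(1/\delta))$.

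The main subtlety is the independence step: even though a compression scheme may choose its $k$ indices adaptively from $S$, the union bound over all $m^k$ deterministic tuples reduces the analysis to the case of fixed indices, where the standard independence between the compression subsample and its complement delivers the Chernoff-type tail bound. Once that reduction is in place, the remainder is routine algebraic bookkeeping, and the robust-risk setting introduces no additional difficulty beyond replacing the $0/1$ loss with the robust $0/1$ loss throughout.
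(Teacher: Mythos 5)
Your proof is correct and follows the standard Littlestone--Warmuth / Graepel--Herbrich--Shawe-Taylor sample compression argument, which is exactly the route taken by the cited source \citep{pmlr-v99-montasser19a}; the paper itself does not re-prove this lemma but sketches the same argument (fixed index tuple, concentration, union bound over $m^k$ tuples) in the proof of Lemma~\ref{lem:reduction-to-realizable}. Your observation that the robust $0/1$ loss is still $\{0,1\}$-valued, so the multiplicative tail bound applies verbatim once the compression subsample is fixed, is precisely the point the paper relies on.
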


We are now ready to proceed with the proof of \prettyref{thm:realizable-bounds}.

\begin{proof}[of \prettyref{thm:realizable-bounds}]
We begin with proving the upper bound. Let $m_0=\CM$. By \prettyref{lem:expectation-bounds}, there exists a learner $\bbA$ from orienting the \graph~$G_{\calH}^{\calU}=(V_{m_0+1},E_{m_0+1})$ that satisfies worst-case expected risk
\begin{equation}
\label{eqn:in-proof}
    \calE_{m_0}(\bbA;\calH,\calU)\leq \frac{1}{3}.
\end{equation}
Let $\calD\in\RE(\calH,\calU)$ be some unknown \emph{robustly realizable} distribution. Fix $\eps,\delta\in(0,1)$ and a sample size $m(\eps,\delta)$ that will be determined later. Let $S=\SET{(x_1,y_1),\dots,(x_m,y_m)}$ be an i.i.d.~sample from $\calD$. Our strategy is to use $\bbA$ above as a \emph{weak} robust learner and boost its confidence and robust error guarantee.

\paragraph{Weak Robust Learner.} Observe that, by \prettyref{eqn:in-proof}, for any empirical distribution $D$ over $S$, $\Ex_{S'\sim D^{m_0}} \Risk_{\calU}(\bbA(S');D)\leq 1/3$.
This implies that for any empirical distribution $D$ over $S$, there exists at least one sequence $S_D\in (S)^{m_0}$ such that $h_D:=\bbA(S_D)$ satisfies $\Risk_\calU(h_{D};D)\leq 1/3$. We use this to define a weak robust-learner for 
distributions $D$ over $S$: 
i.e., for any $D$, the weak learner chooses $h_{D}$ 
as its weak hypothesis.

\paragraph{Boosting.}
Now we run the $\alpha$-Boost boosting algorithm 
\citep*[][Section 6.4.2]{schapire:12} 
on data set $S$, 
but using the robust loss rather than $0$-$1$ loss.  
That is, we start with $D_{1}$ uniform on $S$. Then for each round $t$, we get $h_{D_{t}}$ as a weak robust classifier with respect to $D_{t}$, 
and for each $(x,y) \in S$ we define 
a distribution $D_{t+1}$ over $S$ satisfying 
\[D_{t+1}(\SET{(x,y)}) = \frac{D_{t}(\SET{(x,y)})}{Z_{t}} \times \begin{cases} 
                                                              e^{-2\alpha}& \text{if }\ind[\forall z\in \calU(x): h_{D_{t}}(z) = y]=1\\
                                                              1 &\text{otherwise}
                                                           \end{cases}
\]
where $Z_{t}$ is a normalization factor, $\alpha$ is a parameter that will be determined below.
Following the argument from \citep*[][Section 6.4.2]{schapire:12}, after $T$ rounds we are guaranteed 
\begin{equation*}
\min_{(x,y) \in S} \frac{1}{T} \sum_{t=1}^{T} \ind[ \forall z \in \calU(x): h_{D_t}(z)=y ]
\geq \frac{2}{3} - \frac{2}{3}\alpha - \frac{\ln(|S|)}{2\alpha T},
\end{equation*}
so we will plan on running until round 
$T = 1 + 48 \ln(|S|)$ 
with value 
$\alpha = 1/8$ 
to guarantee
\begin{equation*}
\min_{(x,y) \in S} \frac{1}{T} \sum_{t=1}^{T} \ind[ \forall z \in \calU(x): h_{D_t}(z)=y ]
> \frac{1}{2},
\end{equation*}
so that the majority-vote classifier $\MAJ(h_{D_1},\dots,h_{D_T})$ achieves \emph{zero} robust loss on the empirical dataset $S$, $\Risk_{\calU}(\MAJ(h_{D_1},\dots,h_{D_T});S)=0$. 

Furthermore, note that, since each $h_{D_{t}}$
is given by $\bbA(S_{D_{t}})$, where $S_{D_{t}}$ is an 
$m_0$-tuple of points in $S$, 
the classifier $\MAJ(h_{D_1},\dots,h_{D_T})$ is specified by an 
ordered sequence of $m_0\cdot T$ points from $S$. Thus, the classifier $\MAJ(h_1,\dots,h_T)$ is representable as the value of an (order-dependent) reconstruction function $\phi$ with 
a compression set size $m_0T=m_0O(\log m)$. Now, invoking \prettyref{lem:robust-compression}, we get the following \emph{robust} generalization guarantee: with probability at least $1-\delta$ over $S\sim \calD^{m}$,
\[\Risk_{\calU}(\MAJ(h_1,\dots,h_T);\calD)\leq O\inparen{\frac{m_0\log^2 m}{m} + \frac{\log(1/\delta)}{m}},\]
and setting this less than $\eps$ and solving for a sufficient size of $m$ yields the stated sample complexity bound. 

We now turn to proving the lower bound. Let $n_0=\frac{\CM}{2}$, by invoking \prettyref{lem:expectation-bounds}, we get that $\calE_{n_0/\eps}\geq \Omega(\eps)$. Then, by \prettyref{eqn:sample-complexity}, this implies that $\calM_{\eps}(\calH,\calU)\geq \Omega(1/\eps)n_0 \geq \Omega(1/\eps)\CM$. 
\end{proof}

\section{Proofs for \prettyref{sec:agnostic}}
\label{app:section-5}

\begin{defn}[Agnostic Robust PAC Learnability]
\label{def:ag_sample_complexity}
For any $\eps,\delta \in (0,1)$, the \emph{sample complexity of agnostic robust $(\eps, \delta)-$PAC learning of $\calH$ with respect to perturbation set $\calU$}, denoted $\calM^{\ag}_{\eps,\delta}(\calH,\calU)$, is defined as the smallest $m \in \bbN \cup \{0\}$ for which there exists a learner $\bbA: (\calX\times \calY)^* \to \calY^\calX$ such that, for every data distribution $\calD$ over $\calX \times \calY$, with probability at least $1-\delta$ over $S \sim \calD^m$,
\begin{equation*}
\Risk_{\calU}(\bbA(S);\calD)\leq \inf\limits_{h \in \calH} \Risk_{\calU}(h;\calD) + \eps.
\end{equation*}
If no such $m$ exists, define $\calM^{\ag}_{\eps,\delta}(\calH,\calU) = \infty$. We say that $\calH$ is robustly PAC learnable in the agnostic setting with respect to perturbation set $\calU$ if $\forall \epsilon,\delta \in (0,1)$, 
$\calM^{\ag}_{\eps,\delta}(\calH,\calU)$ is finite.
\end{defn}

\begin{proof}[of \prettyref{lem:reduction-to-realizable}]
The argument follows closely a proof of an analogous result by \citep*{david:16} for non-robust learning, and \citep*{pmlr-v99-montasser19a} for robust learning.
Denote by $\bbA$ a
realizable learner with sample complexity 
$\calM^\re_{1/3}(\calH,\calU)$, and denote $\Mre = \calM^\re_{1/3}(\calH,\calU)$. 

\paragraph{Description of agnostic learner $\bbB$.} Given a data set $S \sim \calD^{m}$ where $\calD$ is some unknown distribution, 
we first do robust-ERM to find a maximal-size 
subsequence $S^{\prime}$ of the data where the robust 
loss can be zero: that is, 
$\inf_{h \in \calH} \hat{\Risk}_{\calU}(h;S^{\prime}) = 0$.
Then for any distribution $D$ over $S^{\prime}$, 
there exists a sequence $S_{D} \in (S^{\prime})^{\Mre}$ 
such that $h_{D} := \bbA(S_{D})$ has 
$\Risk_{\calU}(h_{D};D) \leq 1/3$; 
this follows since, by definition of 
$\calM^\re_{1/3}(\calH,\calU)$, $\calE_{\Mre}(\bbA;\calH,\calU)\leq 1/3$ so at least one such $S_{D}$ exists.
We use this to define a weak robust-learner for 
distributions $D$ over $S^{\prime}$: 
i.e., for any $D$, the weak learner chooses $h_{D}$ 
as its weak hypothesis.

Now we run the $\alpha$-Boost boosting algorithm 
\citep*[][Section 6.4.2]{schapire:12} 
on data set $S^{\prime}$, 
but using the robust loss rather than $0$-$1$ loss.  
That is, we start with $D_{1}$ uniform on $S^{\prime}$.\footnote{We ignore the possibility of repeats; 
for our purposes we can just remove any repeats from $S^{\prime}$ 
before this boosting step.}
Then for each round $t$, we get $h_{D_{t}}$ 
as a weak robust classifier with respect to $D_{t}$, 
and for each $(x,y) \in S^{\prime}$ we define 
a distribution $D_{t+1}$ over $S^{\prime}$ satisfying 
\begin{equation*}
D_{t+1}(\{(x,y)\}) \propto 
D_{t}(\{(x,y)\}) \exp\!\left\{-2\alpha \ind[ \forall x^{\prime} \in \calU(x), h_{D_{t}}(x^{\prime})=y ] \right\},
\end{equation*}
where $\alpha$ is a parameter we can set.
Following the argument from \citep*[][Section 6.4.2]{schapire:12}, after $T$ rounds we are guaranteed 
\begin{equation*}
\min_{(x,y) \in S^{\prime}} \frac{1}{T} \sum_{t=1}^{T} \ind[ \forall x^{\prime} \in \calU(x), h_{D_{t}}(x^{\prime})=y ]
\geq \frac{2}{3} - \frac{2}{3}\alpha - \frac{\ln(|S^{\prime}|)}{2\alpha T},
\end{equation*}
so we will plan on running until round 
$T = 1 + 48 \ln(|S^{\prime}|)$ 
with value 
$\alpha = 1/8$ 
to guarantee
\begin{equation*}
\min_{(x,y) \in S^{\prime}} \frac{1}{T} \sum_{t=1}^{T} \ind[ \forall x^{\prime} \in \calU(x), h_{D_{t}}(x^{\prime})=y ]
> \frac{1}{2},
\end{equation*}
so that the classifier 
$\hat{h}(x) := \ind\!\left[ \frac{1}{T} \sum_{t=1}^{T} h_{D_{t}}(x) \geq \frac{1}{2} \right]$ 
has $\hat{\Risk}_{\calU}(\hat{h};S^{\prime}) = 0$.

Furthermore, note that, since each $h_{D_{t}}$
is given by $\bbA(S_{D_{t}})$, where $S_{D_{t}}$ is an 
$\Mre$-tuple of points in $S^{\prime}$, 
the classifier $\hat{h}$ is specified by an 
ordered sequence of $\Mre T$ points from $S$.
Altogether, $\hat{h}$ is a function specified 
by an ordered sequence of $\Mre T$ points 
from $S$, and which has 
\begin{equation*}
\hat{\Risk}_{\calU}(\hat{h};S) \leq \min_{h \in \calH} \hat{\Risk}_{\calU}(h;S).
\end{equation*}
Similarly to the realizable case (see the proof of Lemma~\ref{lem:robust-compression}), 
uniform convergence guarantees for sample compression 
schemes \citep*[see][]{graepel:05} remain valid for the robust loss, 
by essentially the same argument; the essential argument is the 
same as in the proof of Lemma~\ref{lem:robust-compression} except 
using Hoeffding's inequality to get concentration of the empirical 
robust risks for each fixed index sequence, and then a union bound over 
the possible index sequences as before.
We omit the details for brevity. 
In particular, denoting $T_{m} = 1 + 48 \ln(m)$, 
for $m > \Mre T_{m}$, with probability at least $1-\delta/2$, 
\begin{equation*}
\Risk_{\calU}(\hat{h};\calD)
\leq \hat{\Risk}_{\calU}(\hat{h};S) 
+ \sqrt{\frac{\Mre T_{m} \ln(m) + \ln(2/\delta)}{2m - 2\Mre T_{m}}}.
\end{equation*}

Let $h^{*} = \argmin_{h \in \calH} \Risk_{\calU}(h;\calD)$ 
(supposing the min is realized, for simplicity; else 
we could take an $h^{*}$ with very-nearly minimal risk). 
By Hoeffding's inequality, with probability at least 
$1-\delta/2$, 
\begin{equation*}
\hat{\Risk}_{\calU}(h^{*};S) 
\leq \Risk_{\calU}(h^{*};\calD) 
+ \sqrt{\frac{\ln(2/\delta)}{2m}}.
\end{equation*}

By the union bound, if $m \geq 2 \Mre T_{m}$, with probability at least $1-\delta$, 
\begin{align*}
\Risk_{\calU}(\hat{h};\calD) 
& \leq \min_{h \in \calH} \hat{\Risk}_{\calU}(h;S) 
+ \sqrt{\frac{\Mre T_{m} \ln(m) + \ln(2/\delta)}{m}}
\\ & \leq \hat{\Risk}_{\calU}(h^{*};S) 
+ \sqrt{\frac{\Mre T_{m} \ln(m) + \ln(2/\delta)}{m}}
\\ & \leq \Risk_{\calU}(h^{*};\calD) 
+ 2\sqrt{\frac{\Mre T_{m} \ln(m) + \ln(2/\delta)}{m}}.
\end{align*}
Since 
$T_{m} = O( \log(m) )$, 
the above is at most $\eps$ 
for an appropriate choice of sample size 
$m = O\!\left( \frac{\Mre}{\eps^{2}} \log^{2}\!\left(\frac{\Mre}{\eps}\right) + \frac{1}{\eps^{2}}\log\!\left(\frac{1}{\delta}\right) \right)$.
This concludes the upper bound on $\calM^{\ag}_{\eps,\delta}(\bbB;\calH,\calU)$, and the lower bound trivially holds from the definition of $\bbB$. 
\end{proof}

\section{Finite Character Property}
\label{app:finite-char-property}

\citet*{DBLP:journals/natmi/Ben-DavidHMSY19} gave a \emph{formal} definition of the notion of ``dimension'' or ``complexity measure'', that all previously proposed dimensions in statistical learning theory comply with. In addition to characterizing learnability, a dimension should satisfy the \emph{finite character} property:
\begin{defn}[Finite Character]
\label{def:complexity-measure}
A dimension characterizing learnability can be abstracted as a function $F$ that maps a class $\calH$ to $\bbN\cup \SET{\infty}$ and satisfies the \emph{finite character} property: For every $d\in\bbN$ and $\calH$, the statement $``F(\calH)\geq d"$ can be demonstrated by a finite set $X\subseteq \calX$ of domain points, and a finite set of hypotheses $H\subseteq \calH$. That is, $``F(\calH)\geq d"$ is equivalent to the existence of a bounded first order formula $\phi(\calX,\calH)$ in which all the quantifiers are of the form: $\exists x\in\calX,\forall x\in\calX$ or $\exists h\in\calH,\forall h\in \calH$.
\end{defn}
For example, the property $``\vc(\calH)\geq d"$ is a finite character property since it can be verified with a finite set of points $x_1\dots, x_d\in \calX$ and a finite set of classifiers $h_{1},\dots, h_{2^d}\in \calH$ that shatter these points, and a predicate $E(x,h)\equiv x \in h$ (i.e., the value $h(x)$). In our case, in addition to having a domain $\calX$ and a hypothesis class $\calH$, we also have a relation $\calU$. In \prettyref{clm:finite-char}, we argue that our dimension $\CM$ satisfies \prettyref{def:complexity-measure}, though unlike VC dimension, we do need $\forall$ quantifiers. Furthermore, we provably \emph{cannot} verify the statement $\CM\geq d$ by evaluating the predicate $E(x,h)$ on finitely many $x$'s and $h$'s, but we can verify it using a predicate $P_{\calU}(x,h)\equiv  \forall z\in\calU(x): h(z)=h(x)$ that evaluates the \emph{robust} behavior of $h$ on $x$ w.r.t.~$\calU$. The proof is deferred to \prettyref{app:finite-char-property}.
\begin{claim}
\label{clm:finite-char}
$\CM$ satisfies the finite character property of \prettyref{def:complexity-measure}.
\end{claim}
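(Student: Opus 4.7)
The plan is to exhibit, for each $d \in \bbN$, a finite witness $(X, H)$ with $X \subseteq \calX$ and $H \subseteq \calH$ together with a bounded first-order formula over $(\calX, \calH)$ using only quantifiers of the allowed form and the predicates $E(x, h) \equiv h(x)$, $P_\calU(x, h)$, and the $\calU$-membership relation $z \in \calU(x)$. Unpacking the definition, $\CM \geq d$ is equivalent to the existence of an integer $n \geq d$ together with a finite subgraph $G = (V, E)$ of $G^\calU_\calH = (V_n, E_n)$ such that every orientation of $G$ has some vertex with adversarial out-degree at least $n/3$. From any such $G$ I would extract $X_G = \bigcup_{v \in V} \{x : (x, y) \in v\} \cup \{z : (\{u, v\}, z) \in E\}$ and $H_G = \{h_v : v \in V\}$, where $h_v \in \calH$ is a hypothesis robustly realizing $v$ (guaranteed by $v \in V_n$). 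Both sets are finite since $V$ and $E$ are finite and each vertex carries only $n$ labeled examples.

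For each parameter tuple $(n, k, m)$ with $n \geq d$ and each label pattern $\bar y \in \{\pm 1\}^{nk}$, I would write a first-order formula $\phi_{n, k, m, \bar y}$ that existentially quantifies over $nk + m$ points in $\calX$ and $k$ hypotheses in $\calH$, and whose body is the conjunction of three finite parts. The first, certifying that each vertex $v_i := \{(x_{i,1}, \bar y_{i,1}), \ldots, (x_{i,n}, \bar y_{i,n})\}$ lies in $V_n$, is $\bigwedge_{i, j}[E(x_{i,j}, h_i) = \bar y_{i,j} \wedge P_\calU(x_{i,j}, h_i)]$; this is exactly where the $P_\calU$ predicate is indispensable, since $\calU(x)$ may be infinite and robust realizability cannot be verified via $E$ alone. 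The second asserts, for each designated edge $\ell$, that the two associated vertices have symmetric difference exactly $\{(x, y_0), (\tilde x, -y_0)\}$ and that the witness $z_\ell$ satisfies $z_\ell \in \calU(x) \wedge z_\ell \in \calU(\tilde x)$. The third is a $2^m$-fold conjunction ranging over every orientation of the $m$ candidate edges, each asserting that some vertex has adversarial out-degree at least $n/3$ under that orientation --- a purely combinatorial check on the already-quantified vertex/edge incidences that introduces no further quantifiers.

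The main obstacle is that the integer $n$ (and $k, m, \bar y$) is not a point in $\calX$ or hypothesis in $\calH$, so a single finite formula cannot directly range over all admissible $n$; the above construction yields $\CM \geq d \iff \bigvee_{n \geq d,\, k, m,\, \bar y} \phi_{n, k, m, \bar y}$, which is a priori a countable disjunction. I would resolve this by invoking the finite character definition literally: it asks only that, for each specific instance $(\calX, \calH, \calU)$ with $\CM \geq d$, some finite $X$ and $H$ demonstrate $\CM \geq d$ via a bounded first-order formula. Whenever $\CM \geq d$ holds, the $\max$ in the definition of $\CM$ supplies concrete finite values $(n^*, k^*, m^*, \bar y^*)$ for which $\phi_{n^*, k^*, m^*, \bar y^*}$ is a bona fide finite first-order formula of the required form, witnessed by $X_G, H_G$ as above. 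This mirrors how finite character is verified for the VC dimension --- the witness size depends on $d$ but is finite for each instance --- and confirms that $\CM$ satisfies \prettyref{def:complexity-measure}.
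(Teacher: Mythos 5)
Your proposal takes essentially the same route as the paper's (very brief) argument: extract a finite witness $X, H$ from a finite certifying subgraph of $G^\calU_\calH$ and encode the subgraph property as a bounded first-order formula built from the evaluation predicate $E(x,h)=h(x)$ together with the robustness predicate $P_\calU$. The paper's proof is really just a sketch of this, and you supply substantially more detail. In doing so you surface two points the paper elides. First, verifying the edge structure of $G^\calU_\calH$ (that some $z$ lies in $\calU(x)\cap\calU(\tilde x)$) requires a $\calU$-membership relation that $P_\calU$ alone cannot express; the paper only names $P_\calU$, so your inclusion of $z\in\calU(x)$ as a predicate is a genuine and necessary addition. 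Second, you correctly flag the quantification over $n$: since $\CM\geq d$ unpacks to ``there exists $n\geq d$ with a certifying subgraph of $(V_n,E_n)$,'' the naive encoding is a countable disjunction. The paper sidesteps this by working directly with $(V_d,E_d)$, which implicitly presumes the set of certifying $n$ is downward-closed --- a monotonicity property the paper never states or argues. Your resolution (take the instance-specific $n^*$ supplied by the $\max$ in the definition) is a defensible reading of Definition~\ref{def:complexity-measure}, though a stricter reading, matching how the VC example is phrased, would want a single formula per $d$ independent of the instance, for which the paper's ``use $n=d$'' and your disjunction are both only partially justified. Given the informality of both, your proof is at least as careful as the paper's and more explicit about where the subtleties lie.
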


\begin{proof}
By the definition of $\CM$ in \prettyref{eqn:complexity-measure}, to demonstrate that $\CM \geq d$, it suffices to present a finite subgraph $G=(V,E)$ of $G^\calU_\calH=(V_d,E_d)$ where every orientation $\calO:E\to V$ has adversarial out-degree at least $\frac{n}{3}$. Since $V$ is, by definition, a finite collection of datasets robustly realizable \wrt $(\calH,\calU)$ this means that we can demonstrate that $\CM \geq d$ with a finite set $X\subseteq \calX$ and a finite set of hypotheses $H\subseteq \calH$ that can construct the finite collection $V$. 

Note that in our case, we do not only have $\calX$ and $\calH$, but also a set relation $\calU$ that specifies for each $x\in\calX$ its corresponding set of perturbations $\calU(x)$. We can still express $\CM \geq d$ with a bounded formula using only quantifiers over $\calH$ and $\calX$, though unlike in the case of VC dimension, we do also need $\forall$ quantifiers.
Furthermore, we provably cannot verify the formula by evaluating $h(x)$ only on finitely many $x\in\calX, h\in \calH$, since $\calU(x)$ can be \emph{infinite}. But, we can verify it given access to a predicate $P_\calU(h,x)\equiv \forall z\in\calU(x): h(z)=h(x)$. 
\end{proof}
\end{document}